\DeclareMathOperator*{\argmax}{arg\,max}
\def\BibTeX{{\rm B\kern-.05em{\sc i\kern-.025em b}\kern-.08em
    T\kern-.1667em\lower.7ex\hbox{E}\kern-.125emX}}
\newtheorem{theorem}{Theorem}
\newtheorem{proposition}{Proposition}
\newtheorem{definition}{Definition}
\newtheorem{remark}{Remark}
\begin{document}

\title{Safe Reinforcement Learning\\with Dual Robustness}

\author{Zeyang Li, Chuxiong Hu, Yunan Wang, Yujie Yang, Shengbo Eben Li
    \IEEEcompsocitemizethanks{
        \IEEEcompsocthanksitem Zeyang Li, Chuxiong Hu and Yunan Wang are with the Department of Mechanical Engineering, Tsinghua University, Beijing 100084, China (email: li-zy21@mails.tsinghua.edu.cn; cxhu@tsinghua.edu.cn; wang-yn22@mails.tsinghua.eud.cn).
        \IEEEcompsocthanksitem Yujie Yang and Shengbo Eben Li are with the School of Vehicle and Mobility, Tsinghua University, Beijing 100084, China (email: yangyj21@mails.tsinghua.edu.cn; lishbo@tsinghua.edu.cn).
    }
    \thanks{(Corresponding author: Chuxiong Hu.)}
    }

\IEEEtitleabstractindextext{
    \begin{abstract}
        Reinforcement learning (RL) agents are vulnerable to adversarial disturbances, which can deteriorate task performance or break down safety specifications. Existing methods either address safety requirements under the assumption of no adversary (e.g., safe RL) or only focus on robustness against performance adversaries (e.g., robust RL). Learning one policy that is both safe and robust under any adversaries remains a challenging open problem. The difficulty is how to tackle two intertwined aspects in the worst cases: feasibility and optimality. The optimality is only valid inside a feasible region (i.e., robust invariant set), while the identification of maximal feasible region must rely on how to learn the optimal policy. To address this issue, we propose a systematic framework to unify safe RL and robust RL, including the problem formulation, iteration scheme, convergence analysis and practical algorithm design. The unification is built upon constrained two-player zero-sum Markov games, in which the objective for protagonist is twofold. For states inside the maximal robust invariant set, the goal is to pursue rewards under the condition of guaranteed safety; for states outside the maximal robust invariant set, the goal is to reduce the extent of constraint violation. A dual policy iteration scheme is proposed, which simultaneously optimizes a task policy and a safety policy. We prove that the iteration scheme converges to the optimal task policy which maximizes the twofold objective in the worst cases, and the optimal safety policy which stays as far away from the safety boundary. The convergence of safety policy is established by exploiting the monotone contraction property of safety self-consistency operators, and that of task policy depends on the transformation of safety constraints into state-dependent action spaces. By adding two adversarial networks (one is for safety guarantee and the other is for task performance), we propose a practical deep RL algorithm for constrained zero-sum Markov games, called dually robust actor-critic (DRAC). The evaluations with safety-critical benchmarks demonstrate that DRAC achieves high performance and persistent safety under all scenarios (no adversary, safety adversary, performance adversary), outperforming all baselines by a large margin.
        % Reinforcement learning (RL) agents are vulnerable to adversarial disturbances, which can deteriorate task performance or compromise safety specifications. Existing methods either address safety requirements under the assumption of no adversary (e.g., safe RL) or only focus on robustness against performance adversaries (e.g., robust RL). Learning one policy that is both safe and robust remains a challenging open problem. The difficulty is how to tackle two intertwined aspects in the worst cases: feasibility and optimality. Optimality is only valid inside a feasible region, while identification of maximal feasible region must rely on learning the optimal policy. To address this issue, we propose a systematic framework to unify safe RL and robust RL, including problem formulation, iteration scheme, convergence analysis and practical algorithm design. This unification is built upon constrained two-player zero-sum Markov games. A dual policy iteration scheme is proposed, which simultaneously optimizes a task policy and a safety policy. The convergence of this iteration scheme is proved. Furthermore, we design a deep RL algorithm for practical implementation, called dually robust actor-critic (DRAC). The evaluations with safety-critical benchmarks demonstrate that DRAC achieves high performance and persistent safety under all scenarios (no adversary, safety adversary, performance adversary), outperforming all baselines significantly.
    \end{abstract}
    
    \begin{IEEEkeywords}
        Reinforcement learning, zero-sum Markov game, safety, robustness.
    \end{IEEEkeywords}
}

% \markboth{Journal of \LaTeX\ Class Files,~Vol.~18, No.~9, September~2020}%
% {How to Use the IEEEtran \LaTeX \ Templates}

\maketitle

% \IEEEpubid{0000--0000/00\$00.00~\copyright~2021 IEEE}
% Remember, if you use this you must call \IEEEpubidadjcol in the second
% column for its text to clear the IEEEpubid mark.

\section{Introduction}

\IEEEPARstart{R}{einforcement} learning (RL) has demonstrated tremendous performance in various fields, including games \cite{silver2017mastering}, robotics \cite{hwangbo2019learning, huang2022reward}, and autonomous driving \cite{feng2023dense, guan2021integrated}. However, it remains challenging to deploy RL methods on real-world complex control tasks. The challenges are twofold. First, most real-world tasks not only require the RL agents to maximize the total rewards but also demand strict safety constraint satisfaction, since violating these constraints can lead to severe consequences \cite{garcia15a}. Second, there are nonnegligible gaps between simulation platforms and real-world scenarios, such as model mismatches, sensory noises and environmental perturbations, which requires strong robustness of RL agents \cite{ju2022transfer, li2023relaxed}.

Safe RL is a research area that aims at learning policies that satisfy safety constraints \cite{garcia15a, shengbo2018reinforcement}. There are mainly two categories of safety formulations in existing safe RL algorithms: trajectory safety and stepwise safety. In the trajectory safety formulation of safe RL, the objective is to find a policy that maximizes total rewards under the condition that the expectation of trajectory costs is below a predefined threshold. Many works adopt the method of Lagrange multipliers to solve the constrained optimization problem. Ha et al. combine soft actor-critic algorithm with Lagrange multipliers and perform dual ascent on the Lagrangian function \cite{ha2020learning}. Chow et al. impose constraints for conditional value-at-risk of the cumulative cost and derive the gradient of the Lagrangian function \cite{chow2017risk}. Tessler et al. augment the reward function with additional penalty signals, which guide the policy toward a constraint-satisfying solution \cite{tessler2018reward}.
% Ding et al. update the policy with natural policy gradient and update the multiplier with projected sub-gradient \cite{ding2020natural}.
Trust region method is also utilized for constrained optimization, in which the objective and the constraint are both approximated with low-order functions, yielding a local analytic solution for policy improvement. Achiam et al. propose the constrained policy optimization algorithm, in which both objective and cost value constraint are approximated with linear functions \cite{achiam2017constrained}. Yang et al. maximize rewards using trust region optimization and project the policy back onto the constraint set \cite{yang2020projection}.
% Liu et al. update policy parameter within the trust region based on the optimal variational distribution \cite{liu2022constrained}.
% Kim et al. derive the upper bound on conditional value at risk and approximate it in a differentiable form within a trust region \cite{kim2022trc}.
In the stepwise safety formulation of safe RL, the objective is to find a policy that maximizes total rewards while ensuring strict constraint satisfaction at every state the agent visits. This line of work utilizes the rigorous definition and theoretical guarantee of safety in the safe control community \cite{korda2014convex, ames2016control}. The crucial insight is that persistence safety is only possible for a subset of the constraint set, called invariant set \cite{korda2014convex}. Some works achieve set invariance with energy functions, such as control barrier function \cite{ames2016control} and safety index \cite{wei2019safe}. Ma et al. jointly optimize the control policy and safety index \cite{ma2022joint}. Yang et al. jointly optimize the control policy and control barrier function \cite{yang2023model}. Besides energy functions, Hamilton-Jacobi reachability analysis \cite{margellos2011hamilton} is also utilized for synthesis of invariant sets. Fisac et al. develop an RL approach for computing the safety value functions (representation of invariant sets) in Hamilton-Jacobi reachability analysis \cite{fisac2019bridging}. Yu et al. further utilize the learned safety value function to perform constrained policy optimization \cite{yu2022reachability}. Despite the fruitful research on safe RL, previous methods rarely account for external disturbances, which may severely harm the safety-preserving capability of safe RL algorithms.

Robust RL is a research area that aims at learning policies that enjoy performance robustness against uncertainties \cite{moos2022robust}. Existing works consider uncertainties of different elements in RL, such as states \cite{zhang2020state}, actions \cite{tessler2019action}, transition probabilities \cite{nilim2003transition} and rewards \cite{liu2020reward}. Based on the viewpoint of robust control, model mismatches, sensory noises and environmental perturbations can all be viewed as external disturbances in the system, so in this paper we focus on robustness against disturbances under the setting of two-player zero-sum Markov games, in which control inputs serve as the protagonist and disturbances serve as the adversary. Two-player zero-sum Markov game is a generalization of the standard Markov Decision Process (MDP), in which the protagonist tries to maximize the accumulated reward while the adversary tries to minimize it \cite{perolat2015approximate}. Pinto et al. propose the idea of robust adversarial RL, in which the protagonist policy and the adversary policy are parameterized by neural networks and jointly trained \cite{pinto2017robust}. Their method improves training stability and the learned agents enjoy strong robustness in performance. Zhu et al. propose the minimax Q-network for two-player zero-sum Markov games \cite{zhu2020online}. Tessler et al. propose probabilistic action robust MDP and noisy action robust MDP, which are related to common forms of uncertainty in robotics \cite{tessler2019action}. They analyze the two forms of MDP in the tabular setting and design corresponding deep RL algorithms.

However, external disturbances not only can deteriorate the task performance (i.e., making agents unable to accomplish their goals), but also may break down safety specifications (i.e., leading to catastrophic damage on the entire system).
Existing safe RL methods seldom consider external perturbations. Their agents are vulnerable to performance attacks, and the safety-preserving ability no longer holds under safety attacks. Existing robust RL methods only consider robustness against performance attacks and lack safety-preserving ability even without adversaries. To the best of our knowledge, there are no RL algorithm that can learn one policy that is robust to both safety and performance attacks. The difficulty is how to simultaneously handle two intertwined aspects, i.e., feasibility and optimality, in the worst cases. The former refers to the fact that there exists no feasible solution for a policy that can keep the system safe under worst-case safety attacks in a certain region inside the constraint set. The latter refers to attaining the highest rewards under worst-case performance attacks. The two aspects are heavily intertwined, since optimality is only valid inside a feasible region (i.e., robust invariant set), while the identification of maximal feasible region must rely on how to learn the optimal policy.

To overcome the aforementioned challenges, this paper builds a theoretical framework to unify safe RL and robust RL. A constrained zero-sum Markov game is essentially a constrained optimization problem, the key of which is designing a twofold objective. For states inside the maximal robust invariant set, the objective is to maximize the value function while satisfying the constraints of admissible control inputs, which are specified by the safety value function. For states outside the maximal robust invariant set, the objective for these states is to reduce the degree of constraint violation, since the system will violate the safety constraints inevitably and it is meaningless to pursue rewards. The crux is to propose an iteration scheme for this constrained optimization problem, prove its convergence, and design a deep RL algorithm for practical implementation. The contributions of this paper are summarized as follows.
\begin{itemize}
    \item We propose a dual policy iteration scheme to solve constrained zero-sum Markov games, which jointly iterates two policies: task policy for maximization of the twofold objective and safety policy for identification of the robust invariant set. We establish the self-consistency conditions of safety value functions, which are utilized to iterate the safety policy, i.e., alternating between safety policy evaluation and safety policy improvement. The task policy is optimized by alternating between task policy evaluation and task policy improvement. The latter consists of two parts, which are consistent with the twofold objective of constrained zero-sum Markov games. For states inside the current invariant set specified by the current safety policy, greedy search is performed under the constraint of safety value function. For states outside the current invariant set, the task policy copies the safety policy, for the purpose of reducing the extent of future constraint violation.
    \item The convergence of this iteration scheme is proved by separately discussing different behavioral properties of safety policy and task policy. For safety policy, its convergence is established by exploiting the monotone contraction property of the safety self-consistency operators. For task policy, we can establish a well-defined unconstrained zero-sum Markov game on the robust invariant set of a specified safety policy, transforming the original safety constraints into the form of state-dependent action spaces. We prove that the proposed dual policy iteration scheme converges to the optimal task policy (i.e., the policy attaining the maximal objective for the proposed constrained optimization problem) and the optimal safety policy (i.e., the policy seeking the highest safety values).
    \item We propose a practical deep RL algorithm for constrained zero-sum Markov games, called dually robust actor-critic (DRAC), which can learn one policy that is robust to both performance and safety attacks. Since it is intractable to conduct thorough task policy evaluation and safety policy evaluation on high-dimensional continuous spaces, a safety adversary network and a performance adversary network are additionally introduced for practical implementation. Since there are infinite constraints on the task policy, we also introduce a Lagrange multiplier network to facilitate the constrained optimization process.
\end{itemize}

\section{Background}

\subsection{Safe RL}

In this work, we consider the stepwise deterministic safety specification for safe RL, which aims to ensure that the learned control policy satisfies the safety constraints on every state it visits \cite{yu2022reachability, yang2023model}. Consider an MDP with deterministic system dynamics $\left(\mathcal{X}, \mathcal{U}, f, r, h, \gamma\right)$, in which $\mathcal{X}$ denotes the state space, $\mathcal{U}$ denotes the control space, $f: \mathcal{X}\times\mathcal{U} \rightarrow \mathcal{X}$ denotes the system dynamics, $\pi: \mathcal{X} \rightarrow \Delta(\mathcal{U})$ denotes the control policy ($\Delta(\mathcal{U})$ represents the set of probability distributions on $\mathcal{U}$), $\gamma$ denotes the discount factor, $r: \mathcal{X} \times \mathcal{U} \rightarrow \mathbb{R}$ denotes the reward function, and $h: \mathcal{X} \rightarrow \mathbb{R}$ denotes the constraint function. Let $d$ denote the initial state distribution. The problem formulation is specified as follows.
\begin{equation}
    \nonumber
    \begin{gathered}
        \max _\pi \mathop{\mathbb{E}}\limits_{x_0\sim d}\left\{\sum_{t=0}^{\infty} \gamma^t r\left(x_t, u_t\right)\right\} \\
        \begin{aligned}
            \text { s.t. \quad }&x_{t+1}=f\left(x_t, u_t\right), u_t\sim\pi(\cdot\mid x_t), \\
            &h\left(x_t\right) \geq 0, t=0,1,2, \ldots, \infty.
        \end{aligned}
    \end{gathered}
\end{equation}

% As discussed in Introduction, existing safe RL algorithms rarely account for disturbances. Their safety-preserving capability no longer holds under safety attacks. They are also vulnerable to performance attacks.

\subsection{Robust RL}
The foundation of robust RL lies in the theory of two-player zero-sum Markov games, in which the protagonist aims to maximize the accumulated reward while the adversary tries to decrease it \cite{perolat2015approximate}. A two-player zero-sum Markov game can be represented by a tuple $\left(\mathcal{X}, \mathcal{U}, \mathcal{A}, p, r, \gamma\right)$, in which $\mathcal{X}$ denotes the state space, $\mathcal{U}$ denotes the protagonist action space, $\mathcal{A}$ denotes the adversary action space, $p: \mathcal{X} \times \mathcal{U} \times \mathcal{A} \rightarrow \Delta(\mathcal{X})$ denotes the transition probability, $\pi: \mathcal{X} \rightarrow \Delta(\mathcal{U})$ denotes the protagonist policy, $\mu: \mathcal{X} \rightarrow \Delta(\mathcal{A})$ denotes the adversary policy, $\gamma$ denotes the discount factor, and $r: \mathcal{X} \times \mathcal{U} \times \mathcal{A} \rightarrow \mathbb{R}$ denotes the reward function. Let $d$ denote the initial state distribution. The problem formulation is specified as follows.
\begin{equation}
    \nonumber
    \begin{gathered}
        \max _\pi \ \min _\mu \mathop{\mathbb{E}}\limits_{x_0\sim d}\left\{\sum_{t=0}^{\infty} \gamma^t r\left(x_t, u_t, a_t\right)\right\} \\
        \begin{aligned}
            \text { s.t. \quad }&x_{t+1}=f\left(x_t, u_t, a_t\right),\\
            &u_t\sim \pi \left( \cdot \mid x_t \right) , a_t\sim \mu \left( \cdot \mid x_t \right).
        \end{aligned}
    \end{gathered}
\end{equation}

% As discussed in Introduction, existing robust RL algorithms only focus on performance robustness and lack safety-preserving capability. Designing algorithms robust to both safety and performance attacks remains a challenging and significant open problem.

\section{Dual Policy Iteration for Constrained Zero-sum Markov Games}

In this section, we develop a theoretical framework for constrained zero-sum Markov games, in which the control inputs serve as the protagonist and the external disturbances serve as the adversary. Adversarial disturbances have two kinds of effects on the system. They can either attack the safety specifications or attack the task performance. Since safety is the top priority, we must first ensure constraint satisfaction under worst-case safety adversaries, which requires identification of the maximal robust invariant set. Then we need to further impose performance robustness on the control policy. We propose a dual policy iteration scheme to accomplish this goal.

In this work, the constrained zero-sum Markov game is denoted by a tuple $\mathcal{M}=\left(\mathcal{X}, \mathcal{U}, \mathcal{A}, f, r, h, \gamma\right)$, in which $\mathcal{X}$ represents the (finite) state space, $\mathcal{U}$ represents the (finite) protagonist action space, $\mathcal{A}$ represents the (finite) adversary action space, $f: \mathcal{X} \times \mathcal{U} \times \mathcal{A} \rightarrow \mathcal{X}$ represents the deterministic system dynamics, $\gamma$ represents the discount factor, $r: \mathcal{X} \times \mathcal{U} \times \mathcal{A} \rightarrow \mathbb{R}$ represents the reward function, and $h: \mathcal{X} \rightarrow \mathbb{R}$ denotes the constraint function. If there are multiple safety requirements, i.e., $\left\{h_1(x)\geq0,h_2(x)\geq0,\cdots\right\}$, the constraint function can be specified as $h(x)=\min\left\{h_1(x),h_2(x),\cdots\right\}$. Let $\pi: \mathcal{X} \rightarrow \Delta(\mathcal{U})$ represent the protagonist policy and $\mu: \mathcal{X} \rightarrow \Delta(\mathcal{A})$ represent the adversary policy.

\subsection{Safety Value Function}

Safety is of the utmost importance. When external disturbances exist, we must ensure safety under worst-case scenarios. A fundamental fact from safe control research is that only those states inside a subset of the constraint set, called robust invariant set, can achieve persistent safety \cite{margellos2011hamilton}. For states outside the robust invariant set, there always exist some adversarial disturbances that will drive the system to violate the safety constraints regardless of what control policy is adopted. Therefore, enforcing safety is essentially equivalent to constraining the system states inside the robust invariant set.

Hamilton-Jacobi reachability analysis \cite{margellos2011hamilton} provides a formal mathematical tool to compute the robust invariant set of arbitrary nonlinear systems with bounded disturbances. However, it suffers from the curse of dimensionality and is intractable for high-dimensional systems. Recently, some pioneering works have migrated Hamilton-Jacobi reachability analysis to model-free RL \cite{fisac2019bridging,yu2022reachability}. However, these works assume that there are no disturbances in the system and only compute the standard invariant set, whose persistent safety can be compromised when disturbances exist. In this work, we make use of the safety value function in Hamilton-Jacobi reachability analysis and develop RL techniques to solve for the robust invariant set.

We begin by defining the following safety values. Note that we focus on action values, which benefit model-free RL. Our results can be easily extended to state values. The notation $\mathop{\min}\limits_{\lambda\sim d}f(\lambda)$ denotes the minimum value of $f(\lambda)$ on the support set of distribution $d$.

\begin{definition}[safety value functions]
    \hfill
    \label{safety value functions}
    \begin{enumerate}
        \item Let $\tau(x,u,a)$ denote an infinite-horizon trajectory starting from $(x,u,a)$, i.e.,
            \begin{equation}
                \nonumber
                \tau(x, u, a) \triangleq\left\{x_0=x, u_0=u, a_0=a, x_1, u_1, a_1, \cdots\right\}.
            \end{equation}
        The safety value of a trajectory $\tau(x,u,a)$ is defined as
            \begin{equation}
                \nonumber
                Q_h^\tau(x, u, a)=\min _{t \in \mathbb{N}}\left\{h\left(x_t\right) \mathrel{}\middle|\mathrel{} x_t \in \tau(x, u, a)\right\}.
            \end{equation}
        \item The safety value function of a protagonist policy $\pi_h$ and an adversary policy $\mu_h$ is defined as
            \begin{equation}
                \label{definition of Qh pi mu}
                Q_h^{\pi_h, \mu_h}(x, u, a)=\min _{\tau \sim \pi_h, \mu_h}\left\{Q_h^\tau(x, u, a)\right\}.
            \end{equation}
        \item The safety value function of a protagonist policy $\pi_h$ is defined as
            \begin{equation}
                \label{definition of Qh pi}
                \begin{aligned}
                    Q_h^{\pi_h}(x, u, a)&=\min _{\mu_h} Q_h^{\pi_h, \mu_h}(x, u, a)\\&=\min _{\mu_h} \min _{\tau \sim \pi_h, \mu_h}\left\{Q_h^\tau(x, u, a)\right\}.
                \end{aligned}
            \end{equation}
        \item The optimal safety value function is defined as
            \begin{equation}
                \label{definition of optimal Qh}
                \begin{aligned}
                    Q_h^*(x, u, a)&=\max _{\pi_h} Q_h^{\pi_h}(x, u, a)\\&=\max _{\pi_h} \min _{\mu_h} \min _{\tau \sim \pi_h, \mu_h}\left\{Q_h^\tau(x, u, a)\right\}.
                \end{aligned}
            \end{equation}
    \end{enumerate}
\end{definition}

$Q_h^\tau(x, u, a)$ captures the most dangerous state in the trajectory, i.e., the state with the lowest constraint value. $Q_h^\tau(x, u, a) \geq 0$ implies that the whole trajectory $\tau$ satisfies the safety constraint $h(x)\geq 0$. $Q_h^{\pi_h, \mu_h}(x, u, a)$ represents the most dangerous state in the long term, when the system is driven by a specified pair of protagonist and adversary. Since both protagonist policy $\pi_h$ and adversary policy $\mu_h$ may be stochastic, $Q_h^{\pi_h, \mu_h}(x, u, a)$ captures the trajectory with the lowest safety value among all possible trajectories. $Q_h^{\pi_h}(x, u, a)$ identifies the safety-preserving capability of a specified protagonist policy $\pi_h$ under worst-case safety attacks. $Q_h^*(x, u, a)$ represents the best possible outcome we can get in terms of safety, starting from $(x,u,a)$. $Q_h^*(x, u, a)\leq 0$ implies that the system cannot guarantee safety at the state-action pair $(x,u,a)$. Therefore, $Q_h^*$ identifies the maximal robust invariant set of the system. We assume that the constrained zero-sum Markov game $\mathcal{M}=\left(\mathcal{X}, \mathcal{U}, \mathcal{A}, f, r, h, \gamma\right)$ satisfies that $\mathop{\max}\limits_{x\in \mathcal{X}}\mathop{\max}\limits_{u\in \mathcal{U}} \mathop{\min}\limits_{a\in \mathcal{A}} Q_h^{*}(x,u,a)\geq 0$, otherwise the system is impossible to achieve persistent safety. We formally define the robust invariant sets as follows.

\begin{definition}[constraint set]
    The constraint set is defined as the zero-superlevel set of the constraint function $h(x)$, i.e.,
    \begin{equation}
        \nonumber
        S_h=\left\{x\in \mathcal{X} \mathrel{}\middle|\mathrel{} h(x)\geq 0\right\}.
    \end{equation}
\end{definition}

\begin{definition}[robust invariant sets]
    \hfill
    \begin{enumerate}
        \item If we have $\mathop{\max}\limits_{x\in \mathcal{X}}\mathop{\max}\limits_{u\in \mathcal{U}} \mathop{\min}\limits_{a\in \mathcal{A}} Q_h^{\pi_h}(x,u,a)\geq 0$ for a policy $\pi_h$, the robust invariant set of this specified protagonist policy $\pi_h$ is defined as
            \begin{equation}
                \nonumber
                S_{\rm{r}}^{\pi_{h}}=\left\{x \in \mathcal{X}\mathrel{}\middle|\mathrel{} \max _{u \in \mathcal{U}} \min _{a\in \mathcal{A}} Q_h^{\pi_h}(x,u,a)\geq 0\right\}.
            \end{equation}
        \item The maximal robust invariant set is defined as
            \begin{equation}
                \nonumber
                S_{\rm{r}}^{*}=\left\{x \in \mathcal{X} \mathrel{}\middle|\mathrel{} \max _{u\in \mathcal{U}} \min _{a\in \mathcal{A}} Q_h^{*}(x,u,a)\geq 0\right\}.
            \end{equation}
    \end{enumerate}
\end{definition}

\begin{remark}
    Technically speaking, the robust invariant sets in this article are robust controlled invariant sets. For states inside these sets, there exists some control policy keeping the system safe regardless of any disturbances \cite{margellos2011hamilton}. The term controlled is omitted for simplicity.
\end{remark}

\begin{remark}
    \label{explanation for the necessity of using ris as constraint}
    We can easily deduce that the following set inclusion relationship holds:
        \begin{eqnarray}
            \label{set inclusion}
            S_{\rm{r}}^{\pi_{h}} \subseteq S_{\rm{r}}^{*} \subseteq S_h.
        \end{eqnarray}
    To ensure safety of the system, it is not enough to only constrain the state to stay inside the constraint set $S_h$. Once the system state enters the region $S_h \setminus S_{\rm{r}}^{*}$ (i.e., $h(x)\geq0$ and $\max \limits_{u\in \mathcal{U}} \min \limits_{a\in \mathcal{A}} Q_h^{*}(x,u,a)\leq 0$), there exists some adversary policy $\mu_h$ that will drive the system out of the constraint set $S_h$, regardless of what protagonist policy is adopted. Therefore, we must constrain the system to stay inside the maximal robust invariant set $S_{\rm{r}}^{*}$. To impose this constraint, firstly we must figure out a way to obtain $S_{\rm{r}}^{*}$, which is equivalent to solving for $Q_h^*(x,u,a)$.
\end{remark}

The set inclusion relationship (\ref{set inclusion}) suggests that we may perform policy iteration, which converges to an optimal protagonist policy $\pi_h^*$, such that $S_{\rm{r}}^{\pi_{h}^*} = S_{\rm{r}}^{*}$. First, we need the following proposition to simplify the policy iteration procedure.

\begin{proposition}
    \label{deterministic policy alternative}
    Given a pair of stochastic protagonist policy $\pi_h^{\rm{sto}}$ and adversary policy $\mu_h^{\rm{sto}}$, there exist a pair of deterministic protagonist policy $\pi_h^{\rm{det}}$ and adversary policy $\mu_h^{\rm{det}}$, such that $\forall x\in \mathcal{X},u\in \mathcal{U},a\in \mathcal{A}$,
    \begin{equation}
        \nonumber
        Q_{h}^{\pi_{h}^{\rm{sto}}, \mu_{h}^{\rm{sto}}}(x, u, a)=Q_h^{\pi_{h}^{\rm{det}}, \mu_{h}^{\rm{det}}}(x,u,a).
    \end{equation}
\end{proposition}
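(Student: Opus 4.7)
The plan is to construct the deterministic policies by picking, at each state, a ``worst-case'' pair of actions from the supports of the stochastic policies, and then show that following these deterministic choices realizes a specific trajectory whose safety value matches the minimum over all trajectories in the support of $(\pi_h^{\rm{sto}}, \mu_h^{\rm{sto}})$. Concretely, I would first introduce the auxiliary state-value
\begin{equation}
\nonumber
V_h^{\pi_h,\mu_h}(x) \;\triangleq\; \min_{u\in\mathrm{supp}(\pi_h(\cdot\mid x)),\, a\in\mathrm{supp}(\mu_h(\cdot\mid x))} Q_h^{\pi_h,\mu_h}(x,u,a),
\end{equation}
and, by finiteness of $\mathcal{X},\mathcal{U},\mathcal{A}$, choose $u^{*}(x)$ and $a^{*}(x)$ attaining the minimum. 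Setting $\pi_h^{\rm{det}}(x)=\delta_{u^{*}(x)}$ and $\mu_h^{\rm{det}}(x)=\delta_{a^{*}(x)}$ will be the candidate deterministic pair.

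The key step is to establish a Bellman-type self-consistency equation for $Q_h^{\pi_h,\mu_h}$ directly from Definition~\ref{safety value functions}:
\begin{equation}
\nonumber
Q_h^{\pi_h,\mu_h}(x,u,a) \;=\; \min\bigl\{\, h(x),\; V_h^{\pi_h,\mu_h}(f(x,u,a)) \,\bigr\}.
\end{equation}
This follows from two observations: (i) every trajectory in the support starts with $x_0=x$, so $h(x)$ appears in the outer $\min$; and (ii) any continuation must pick some $(u_1,a_1)$ from the supports of $\pi_h(\cdot\mid x_1)$ and $\mu_h(\cdot\mid x_1)$, and because the objective is a pure minimum (no expectation), any such choice achievable in the support is admissible. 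Crucially, the right-hand side depends only on the \emph{supports} of $\pi_h^{\rm{sto}}$ and $\mu_h^{\rm{sto}}$, not on the probability weights.

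Given the self-consistency equation, I would unroll it along the deterministic trajectory $\tau^{\rm{det}}(x,u,a)=(x_0,u_0,a_0,x_1,u_1,a_1,\ldots)$ generated by $(\pi_h^{\rm{det}},\mu_h^{\rm{det}})$, with $x_{t+1}=f(x_t,u_t,a_t)$ and $u_t=u^{*}(x_t)$, $a_t=a^{*}(x_t)$ for $t\geq 1$. By the argmin choice, each substitution preserves equality with $Q_h^{\pi_h^{\rm{sto}},\mu_h^{\rm{sto}}}$, so iterating yields $Q_h^{\pi_h^{\rm{sto}},\mu_h^{\rm{sto}}}(x,u,a)=\inf_{t\geq 0}h(x_t)$. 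Since $\tau^{\rm{det}}(x,u,a)$ is also the unique trajectory generated by the deterministic policies, the right-hand side equals $Q_h^{\pi_h^{\rm{det}},\mu_h^{\rm{det}}}(x,u,a)$ by Definition~\ref{safety value functions}, completing the equality.

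The main obstacle I anticipate is making the unrolling argument fully rigorous over an infinite horizon: one must verify that the pointwise argmin recursion really attains the global infimum across all trajectories in the support, rather than only an $\varepsilon$-infimum at each finite truncation. I would handle this by noting that, on the finite state space, the greedy trajectory must eventually enter a cycle, so the infimum $\inf_{t}h(x_t)$ is attained at some finite time $t^{*}$; combined with a sandwich argument (the greedy trajectory gives an upper bound, while the self-consistency equation gives the matching lower bound), this closes the proof without any limiting subtleties.
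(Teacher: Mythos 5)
Your construction has a genuine gap: choosing $u^{*}(x),a^{*}(x)$ as \emph{any} minimizer of $Q_h^{\pi_h,\mu_h}(x,\cdot,\cdot)$ over the supports does not guarantee that the resulting deterministic trajectory ever realizes the value it ``promises.'' The unrolled self-consistency identity gives, for every $T$, the exact equality $Q_h^{\pi_h^{\rm{sto}},\mu_h^{\rm{sto}}}(x,u,a)=\min\bigl\{h(x_0),\ldots,h(x_{T-1}),\,Q_h^{\pi_h^{\rm{sto}},\mu_h^{\rm{sto}}}(x_T,u_T,a_T)\bigr\}$, but this only yields the inequality $Q_h^{\pi_h^{\rm{sto}},\mu_h^{\rm{sto}}}(x,u,a)\le \inf_t h(x_t)$ along the greedy trajectory --- the same direction as the feasibility bound $Q_h^{\pi_h^{\rm{sto}},\mu_h^{\rm{sto}}}\le Q_h^{\tau^{\rm{det}}}$. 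Your ``sandwich'' therefore has two upper bounds and no lower bound, because the trailing term $Q_h^{\pi_h^{\rm{sto}},\mu_h^{\rm{sto}}}(x_T,u_T,a_T)$ can remain the active minimum for all $T$. Concretely: take states $A,B$ with $h(A)=10$, $h(B)=0$, actions $u_1$ (self-loop at $A$), $u_2$ ($A\to B$), $B$ absorbing, trivial adversary, and ${\rm supp}(\pi_h^{\rm{sto}}(\cdot\mid A))=\{u_1,u_2\}$. Then $Q_h^{\pi_h^{\rm{sto}},\mu_h^{\rm{sto}}}(A,u_1)=Q_h^{\pi_h^{\rm{sto}},\mu_h^{\rm{sto}}}(A,u_2)=0$, so $u_1$ is a legitimate argmin; but the greedy trajectory from $(A,u_1)$ is $A,A,A,\ldots$ with safety value $10\neq 0$. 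Your cycle argument shows the greedy trajectory's infimum is attained, not that it equals the target value, so it does not close this hole.

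The paper avoids this by working with an explicit worst-case trajectory rather than a greedy value-based selection: it takes a trajectory $\tau_{\rm{wor}}$ attaining the minimum, truncates it at the worst state $x_{\rm{wor}}$, removes repeated states so the remaining finite path is simple, and defines the deterministic policies along that path --- which forces $x_{\rm{wor}}$ to actually be reached. To repair your argument you would need a tie-breaking rule that prefers, among minimizers, an action strictly decreasing the number of steps to a state realizing the minimum (a ``proper policy'' argument in the stochastic-shortest-path sense); without it the argmin recursion is insufficient. One point in your favor worth keeping: your state-by-state construction produces a single pair $(\pi_h^{\rm{det}},\mu_h^{\rm{det}})$ valid for all $(x,u,a)$ simultaneously, whereas the paper's per-trajectory construction leaves the consistency of the policy across different starting points implicit; a corrected version of your approach would actually be cleaner on that front.
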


\begin{proof}
        Let $\tau_{\rm{wor}}$ denote the trajectory with the lowest safety values among all possible trajectories when the system starting from $(x,u,a)$ is driven by $\pi_h^{\rm{sto}}$ and $\mu_h^{\rm{sto}}$, i.e., $Q_h^{\pi_h, \mu_h}(x, u, a)=Q_h^{\tau_{\rm{wor}}}(x, u, a)$. Let $x_{\rm{wor}}$ denote the state with the lowest constraint value, i.e., $h(x_{\rm{wor}})=\mathop{\min}\limits_{x\in\tau_{\rm{wor}}}\left\{h(x)\right\}$. The key insight is that $x_{\rm{wor}}$ is reachable from $(x,u,a)$ under a pair of deterministic policies. Take the finite-horizon trajectory inside $\tau_{\rm{wor}}$ that starts at $(x,u,a)$ and ends at $x_{\rm{wor}}$. We denote this transition trajectory as $\tau_{\rm{trans}}$. If there are two identical states inside $\tau_{\rm{trans}}$, we remove the parts between the two states and the remaining trajectory still satisfies the system dynamics $x^{\prime}=f(x,u,a)$. Suppose that the filtered trajectory of $\tau_{\rm{trans}}$ is $\left\{x_0=x,u_0=u,a_0=a,x_1,u_1,a_1,x_2,u_2,a_2,\cdots\right\}$. The deterministic protagonist policy $\pi_h^{\rm{det}}$ is defined to satisfy
    \begin{equation}
        \nonumber
        \pi_h^{\rm{det}}(x_1)=u_1,\pi_h^{\rm{det}}(x_2)=u_2,\cdots.
    \end{equation}
    The deterministic adversary policy $\mu_h^{\rm{det}}$ is defined to satisfy
    \begin{equation}
        \nonumber
        \mu_h^{\rm{det}}(x_1)=a_1,\mu_h^{\rm{det}}(x_2)=a_2,\cdots.
    \end{equation}
    With this construction we have
    \begin{equation}
        \nonumber
        Q_{h}^{\pi_{h}^{\rm{sto}}, \mu_{h}^{\rm{sto}}}(x, u, a)=Q_h^{\pi_{h}^{\rm{det}}, \mu_{h}^{\rm{det}}}(x,u,a).
    \end{equation}
\end{proof}

Proposition \ref{deterministic policy alternative} indicates that we can restrict ourselves to using deterministic policies when solving for $Q_h^*$ without any loss of optimality. From now on, we assume that the safety protagonist policy $\pi_h$ and the safety adversary policy $\mu_h$ are deterministic. Since the system dynamics is also deterministic, the whole trajectory is fixed given the starting state-action pair $(x,u,a)$. The safety value function of a pair of $\pi_h$ and $\mu_h$ can be simplified to
\begin{equation}
    \nonumber
    \begin{gathered}
        Q_h^{\pi_h, \mu_h}(x, u, a)=\min _{t \in \mathbb{N}}\left\{h\left(x_t\right)\right\} \\
        \begin{aligned}
            \text { s.t. \quad }&x_0=x,u_0=u,a_0=a,\\
            &u_t=\pi_h\left(x_t\right), a_t=\mu_h\left(x_t\right),\\&x_t=f\left(x_{t-1}, u_{t-1}, a_{t-1}\right),t\geq1.
        \end{aligned}
    \end{gathered}
\end{equation}

Since the safety value functions are defined on infinite horizon, they naturally hold a recursive structure, which we refer to as self-consistency condition, just as the common value functions in RL. We have the following theorem.

\begin{theorem}[safety self-consistency conditions]
    \label{self-consistency conditions of safety value functions}
    The safety value functions satisfy the following self-consistency conditions:
    \begin{align}
        &\begin{aligned}
            \label{self-consistency condition of Qh pi mu}
            Q_h^{\pi_h, \mu_h}(x, u, a)=\min \left\{h(x), Q_h^{\pi_h, \mu_h}\left(x^{\prime}, \pi_h(x), \mu_h(x)\right)\right\},
        \end{aligned}\\
        &\begin{aligned}
            \label{self-consistency condition of Qh pi}
            Q_h^{\pi_h}(x, u, a)=\min \left\{h(x), \min _{a^{\prime} \in \mathcal{A}} Q_h^{\pi_h}\left(x^{\prime}, \pi_h(x), a^{\prime}\right)\right\},
        \end{aligned}\\
        &\begin{aligned}
            \label{self-consistency condition of optimal Qh}
            Q_h^*(x, u, a)=\min \left\{h(x), \max _{u^{\prime} \in \mathcal{U}} \min _{a^{\prime} \in \mathcal{A}} Q_h^*\left(x^{\prime}, u^{\prime}, a^{\prime}\right)\right\},
        \end{aligned}
    \end{align}
    in which $x^{\prime}=f(x,u,a)$.
\end{theorem}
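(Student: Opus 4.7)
The plan is to prove the three self-consistency conditions in order, with (\ref{self-consistency condition of Qh pi}) building on (\ref{self-consistency condition of Qh pi mu}) and (\ref{self-consistency condition of optimal Qh}) on (\ref{self-consistency condition of Qh pi}). First, I invoke Proposition~\ref{deterministic policy alternative} to restrict attention to deterministic $\pi_h$ and $\mu_h$, under which the entire trajectory is uniquely determined by $(x, u, a, \pi_h, \mu_h)$. The common backbone is the elementary identity
\[
    \min_{t\in\mathbb{N}} h(x_t) = \min\!\bigl\{h(x_0),\, \min_{t\geq 1} h(x_t)\bigr\},
\]
together with the observation that $h(x)$ is independent of the policies, so that $\min_{\mu_h}$ and $\max_{\pi_h}$ commute with the outer $\min\{h(x), \cdot\}$ and may be pushed inside.

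Equation (\ref{self-consistency condition of Qh pi mu}) follows directly: the tail trajectory $(x_1, u_1, a_1, \ldots)$ is, by construction, identical to the trajectory launched from $(x', \pi_h(x'), \mu_h(x'))$ under the same policy pair, whose safety value is $Q_h^{\pi_h, \mu_h}(x', \pi_h(x'), \mu_h(x'))$ by definition (\ref{definition of Qh pi mu}). For (\ref{self-consistency condition of Qh pi}), I take $\min_{\mu_h}$ of (\ref{self-consistency condition of Qh pi mu}) and push it inside; the remaining task is to equate $\min_{\mu_h} Q_h^{\pi_h, \mu_h}(x', \pi_h(x'), \mu_h(x'))$ with $\min_{a' \in \mathcal{A}} Q_h^{\pi_h}(x', \pi_h(x'), a')$. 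The ``$\geq$'' direction is immediate by setting $a' = \mu_h(x')$; the ``$\leq$'' direction decouples the adversary's action at $x'$ from the rest of its policy by taking any near-optimizer $\mu_h^{\star}$ of $Q_h^{\pi_h}(x', \pi_h(x'), a')$ and arguing that either the trajectory's minimum is attained before the first revisit to $x'$ (so modifying $\mu_h^{\star}(x')$ to $a'$ preserves the relevant segment of the trajectory), or else $\mu_h^{\star}$ itself can be used with initial action $\mu_h^{\star}(x')$, which bounds the left-hand side by the same value.

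Equation (\ref{self-consistency condition of optimal Qh}) is obtained by applying $\max_{\pi_h}$ to (\ref{self-consistency condition of Qh pi}), pushing it inside $\min\{h(x), \cdot\}$, and establishing $\max_{\pi_h}\min_{a' \in \mathcal{A}} Q_h^{\pi_h}(x', \pi_h(x'), a') = \max_{u' \in \mathcal{U}}\min_{a' \in \mathcal{A}} Q_h^*(x', u', a')$. I expect this max--min exchange to be the main obstacle, since minimax is only a one-sided inequality in general. The intended remedy is to split $\pi_h = (u', \pi_h^{-})$ with $u' = \pi_h(x')$, apply the previous decoupling argument to reduce the inner $\pi_h^{-}$-optimization to the action-level quantity $Q_h^*(x', u', a')$, and then exploit the deterministic finite-game structure---specifically, that an optimal protagonist and an optimal adversary can be constructed state-by-state via backward induction from reachable states---to justify swapping the remaining $\max_{u'}$ and $\min_{a'}$. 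Substituting back into the Q-form yields the stated identity.
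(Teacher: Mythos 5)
Your proposal is correct and, for the first identity, coincides with the paper's own proof: the paper establishes (\ref{self-consistency condition of Qh pi mu}) by exactly your decomposition $\min_{t\ge 0}h(x_t)=\min\{h(x_0),\min_{t\ge 1}h(x_t)\}$, and then dismisses (\ref{self-consistency condition of Qh pi}) and (\ref{self-consistency condition of optimal Qh}) with a single sentence saying the proof is ``similar, with additional use of Bellman's principle of optimality.'' (Incidentally, your writing the continuation arguments as $\pi_h(x'),\mu_h(x')$ rather than the paper's $\pi_h(x),\mu_h(x)$ is the correct reading; the latter is a typo, since the actions at $x_1=x'$ are prescribed at $x'$.) Where you differ is that you actually confront the two subtleties hidden behind that sentence. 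Your revisit case analysis for (\ref{self-consistency condition of Qh pi}) addresses a real issue the paper glosses over: $\min_{\mu_h}Q_h^{\pi_h,\mu_h}(x',\pi_h(x'),\mu_h(x'))$ forces the adversary's first action to agree with its stationary prescription at $x'$, whereas $\min_{a'}Q_h^{\pi_h}(x',\pi_h(x'),a')$ does not, and your dichotomy (either the trajectory minimum is attained before the first return to $x'$, so only $\mu_h^\star(x')$ need be overwritten, or the unmodified policy already attains the bound via the tail from the revisit) is sound. You also correctly identify that (\ref{self-consistency condition of optimal Qh}) requires a genuine max--min interchange because $Q_h^*$ is a lower (max-then-min) value. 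The one soft spot is your justification of that interchange: ``backward induction from reachable states'' is not literally available since the game graph is cyclic and the horizon infinite; what is actually needed is positional determinacy of the finite deterministic safety game --- e.g., a fixed-point argument over the finitely many level sets of $h$ showing optimal stationary pure strategies exist for both players, after which $\max_{u'}\min_{a'}$ and $\min_{a'}\max_{u'}$ agree on $Q_h^*(x',\cdot,\cdot)$. That said, this is precisely the step the paper itself buries inside ``Bellman's principle of optimality,'' so your sketch is no less rigorous than the published proof and is considerably more explicit about where the real work lies.
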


\begin{proof}
    First, we prove (\ref{self-consistency condition of Qh pi mu}). From the definition of the safety value function, we have
    \begin{equation}
        \label{proof of safety self-consistency condition}
        \begin{aligned}
            Q_h^{\pi_h ,\mu_h}(x,u,a)&=\min _{t \geq 0}\left\{h\left(x_t\right)\right\}\\
            &=\min \left\{ h(x),\min_{t\ge 1} \left\{ h\left( x_t\right)\mathrel{}\middle|\mathrel{} x_1=x^{\prime}  \right\} \right\}\\
            &=\min \left\{ h(x),\min_{t\ge 0} \left\{ h\left( x_t\right)\mathrel{}\middle|\mathrel{} x_0=x^{\prime}  \right\} \right\}\\
            &=\min \left\{ h(x),Q_h^{\pi_h ,\mu_h}(x^{\prime},\pi_h(x),\mu_h(x))\right\},
        \end{aligned}
    \end{equation}
    in which $x^{\prime}=f(x,u,a)$.

    The proof for (\ref{self-consistency condition of Qh pi}) and (\ref{self-consistency condition of optimal Qh}) is similar to (\ref{proof of safety self-consistency condition}), with additional use of Bellman's principle of optimality.
\end{proof}

A problem with these self-consistency conditions is that they are not contraction mappings. To apply RL techniques such as policy iteration, we introduce their discounted versions, which are defined as follows.

\begin{definition}[safety operators]
    \label{safety operators}
    Given $\gamma_h\in (0,1)$, the safety self-consistency operator of a pair of protagonist policy $\pi_h$ and adversary policy $\mu_h$ is defined as
    \begin{equation}
        \label{safety self-consistency operator of pi mu}
        \begin{aligned}
            &[T_h^{\pi_h,\mu_h}(Q_h)](x,u,a)=(1-\gamma_h)h(x)\\&+\gamma_h \min \left\{h(x), Q_h\left(x^{\prime}, \pi_h(x), \mu_h(x)\right)\right\}.
        \end{aligned}
    \end{equation}
    The safety self-consistency operator of a protagonist policy $\pi_h$ is defined as
    \begin{equation}
        \label{safety self-consistency operator of pi}
        \begin{aligned}
            &[T_h^{\pi_h}(Q_h)](x,u,a)=(1-\gamma_h)h(x)\\&+\gamma_h \min \left\{h(x), \min _{a^{\prime} \in \mathcal{A}} Q_h\left(x^{\prime}, \pi_h(x), a^{\prime}\right)\right\}.
        \end{aligned}
    \end{equation}
    The safety Bellman operator is defined as
    \begin{equation}
        \label{safety Bellman operator}
        \begin{aligned}
            &[T_h(Q_h)](x,u,a)=(1-\gamma_h)h(x)\\&+\gamma_h \min \left\{h(x), \max _{u^{\prime} \in \mathcal{U}} \min _{a^{\prime} \in \mathcal{A}} Q_h\left(x^{\prime}, u^{\prime}, a^{\prime}\right)\right\}.
        \end{aligned}
    \end{equation}
    
\end{definition}

The following theorem indicates that the three safety operators in Definition \ref{safety operators} are monotone contractions.

\begin{theorem}[monotone contraction of safety operators]
    \label{monotone contraction of safety operators}
    \hfill
    \begin{enumerate}
        \item Given any $Q_h,\widetilde{Q}_h \in \mathbb{R}^{|\mathcal{X}|\cdot|\mathcal{U}|\cdot|\mathcal{A}|}$, we have
        \begin{equation}
            \nonumber
            \left\|T_h^{\pi_h,\mu_h}(Q_h)-T_h^{\pi_h,\mu_h}(\widetilde{Q}_h)\right\|_{\infty} \leq \gamma_h\left\|Q_h-\widetilde{Q}_h\right\|_{\infty},
        \end{equation}
        \begin{equation}
            \nonumber
            \left\|T_h^{\pi_h}(Q_h)-T_h^{\pi_h}(\widetilde{Q}_h)\right\|_{\infty} \leq \gamma_h\left\|Q_h-\widetilde{Q}_h\right\|_{\infty},
        \end{equation}
        \begin{equation}
            \nonumber
            \left\|T_h(Q_h)-T_h(\widetilde{Q}_h)\right\|_{\infty} \leq \gamma_h\left\|Q_h-\widetilde{Q}_h\right\|_{\infty}.
        \end{equation}
        \item Suppose $Q_h(x,u,a)\geq \widetilde{Q}_h(x,u,a)$ holds for all $x\in \mathcal{X},u\in \mathcal{U},a\in \mathcal{A}$, then we have
        \begin{equation}
            \nonumber
            [T_h^{\pi_h,\mu_h}(Q_h)](x,u,a)\geq [T_h^{\pi_h,\mu_h}(\widetilde{Q}_h)](x,u,a),
        \end{equation}
        \begin{equation}
            \nonumber
            [T_h^{\pi_h}(Q_h)](x,u,a)\geq [T_h^{\pi_h}(\widetilde{Q}_h)](x,u,a),
        \end{equation}
        \begin{equation}
            \nonumber
            [T_h(Q_h)](x,u,a)\geq [T_h(\widetilde{Q}_h)](x,u,a).
        \end{equation}
    \end{enumerate}
\end{theorem}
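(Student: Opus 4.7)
The plan is to reduce both claims to two elementary facts about the $\min$ and $\max$ operators: each is $1$-Lipschitz in each argument in the sup-norm, and each is coordinatewise monotone. After the $\gamma_h$ scaling already built into the three operators, the Lipschitz fact yields the contraction bounds and the monotonicity fact yields the order-preservation. I would handle the three operators in order of increasing complexity, since each subsequent operator is obtained from the previous one by layering an additional $\min_{a'}$ or $\max_{u'}$ outside the inner $Q_h$.

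For the contraction of $T_h^{\pi_h,\mu_h}$, I would fix $(x,u,a)$ and set $x' = f(x,u,a)$. The $(1-\gamma_h)h(x)$ term cancels between the two sides, and the elementary inequality $|\min\{c,p\} - \min\{c,q\}| \leq |p-q|$ gives
\begin{equation}
\nonumber
\bigl|[T_h^{\pi_h,\mu_h}(Q_h) - T_h^{\pi_h,\mu_h}(\widetilde{Q}_h)](x,u,a)\bigr| \leq \gamma_h\bigl|Q_h(x',\pi_h(x),\mu_h(x)) - \widetilde{Q}_h(x',\pi_h(x),\mu_h(x))\bigr|,
\end{equation}
which is bounded by $\gamma_h\|Q_h - \widetilde{Q}_h\|_\infty$; taking the sup over $(x,u,a)$ closes this case. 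For $T_h^{\pi_h}$ I would additionally invoke the standard inequality $|\min_{a'} p(a') - \min_{a'} q(a')| \leq \max_{a'}|p(a')-q(a')|$ to pull the difference inside the $\min_{a'}$, then apply the same Lipschitz bound of $\min\{h(x),\cdot\}$. For $T_h$ I would further chain the analogous inequality for $\max_{u'}$ so that both the outer $\max_{u'}$ and inner $\min_{a'}$ are absorbed into a pointwise difference that is again dominated by $\|Q_h - \widetilde{Q}_h\|_\infty$.

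For the monotonicity part, assume $Q_h(x,u,a) \geq \widetilde{Q}_h(x,u,a)$ for all $(x,u,a)$. The term $(1-\gamma_h)h(x)$ is identical on both sides. In each operator, the remaining expression is built from $Q_h$ by compositions of $\min$ and $\max$, both of which preserve pointwise order in their argument functions; hence
\begin{equation}
\nonumber
\min\bigl\{h(x),\, Q_h(x',\pi_h(x),\mu_h(x))\bigr\} \;\geq\; \min\bigl\{h(x),\, \widetilde{Q}_h(x',\pi_h(x),\mu_h(x))\bigr\},
\end{equation}
and the analogous inequalities hold after applying $\min_{a'}$ and $\max_{u'}\min_{a'}$ to the corresponding inner $Q_h$ expressions. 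Multiplying by $\gamma_h \geq 0$ and adding the common term $(1-\gamma_h)h(x)$ preserves these inequalities, which is precisely the stated monotone property.

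The main obstacle is essentially bookkeeping rather than mathematical: one must verify that the nested compositions of $\min$ and $\max$ inside $T_h$ preserve both the sup-norm Lipschitz bound and the coordinatewise order. This is immediate because each of $\min$ and $\max$ individually does, and both properties are closed under composition. No subtle interaction with the $(1-\gamma_h)h(x)$ offset arises, since that term is independent of the argument $Q_h$ and drops out of every difference and every order comparison.
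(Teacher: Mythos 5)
Your proof is correct and takes essentially the same route as the paper's: cancel the common $(1-\gamma_h)h(x)$ term, use the $1$-Lipschitzness (in sup-norm) of the nested $\min$/$\max$ compositions together with the $\gamma_h$ scaling for the contraction bounds, and their coordinatewise order preservation for the monotonicity. The paper only writes out the hardest case $T_h$ explicitly and declares the other two similar, whereas you spell out all three and the intermediate $\left|\min\{c,p\}-\min\{c,q\}\right|\leq\left|p-q\right|$ step, but the underlying argument is identical.
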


\begin{proof}
    We only prove the monotonicity and contraction of $T_h$. The proof for $T_h^{\pi_h}$ and $T_h^{\pi_h,\mu_h}$ is similar.

    Since the $\max$ and $\min$ operation contained in $T_h$ is monotone, $T_h$ is also monotone.
    Given $Q_h$ and $\widetilde{Q}_h$, we have
    \begin{equation}
        \nonumber
        \begin{aligned}
            &[T_h( Q_h )](x,u,a) - [T_h( \widetilde{Q}_h )](x,u,a)
            \\=&\gamma_h \min \left\{h(x), \max\limits_{u^{\prime}\in \mathcal{U}}\min\limits_{a^{\prime}\in \mathcal{A}}Q_h\left(x^{\prime},u^{\prime},a^{\prime}\right)\right\}\\&-\gamma_h \min \left\{h(x), \max\limits_{u^{\prime}\in \mathcal{U}}\min\limits_{a^{\prime}\in \mathcal{A}}\widetilde{Q}_h\left(x^{\prime},u^{\prime},a^{\prime}\right)\right\},
        \end{aligned}
    \end{equation}
    in which $x^{\prime}=f(x,u,a)$.
    Utilizing the relationship
    \begin{equation}
        \nonumber
        \begin{aligned}
            &\left|\max _x \min _y F(x, y)-\max _x \min _y G(x, y)\right| \\&\leq \max _x \max _y\left|F(x, y)-G(x, y)\right|,
        \end{aligned}
    \end{equation}
    we have
    \begin{equation}
        \nonumber
        \begin{aligned}
            &\left\|T_h( Q_h ) -T_h( \widetilde{Q}_h )\right\|_{\infty}\\
            \leq& \gamma_h \max\limits_{u^{\prime}\in \mathcal{U}} \max\limits_{a^{\prime}\in \mathcal{A}} \left| Q_h\left(x^{\prime},u^{\prime},a^{\prime}\right)-\widetilde{Q}_h\left(x^{\prime},u^{\prime},a^{\prime}\right) \right|\\
            \leq& \gamma_h \left\| Q_h - \widetilde{Q}_h \right\| _{\infty}.
        \end{aligned}
    \end{equation}
\end{proof}

Since the three operators in Definition \ref{safety operators} are contraction mappings, they all have unique fixed points. These fixed points serve as approximations of the original safety value functions in Definition \ref{safety value functions}. The following proposition shows that as the discount factor $\gamma_h$ goes to 1, these fixed points converge to the original safety values.

\begin{proposition}
    \label{convergence to the true safety values}
    As $\gamma_h$ goes to $1$, the fixed point of operator $T_h^{\pi_h,\mu_h}$ converges to the safety value function defined in (\ref{definition of Qh pi mu}), the fixed point of operator $T_h^{\pi_h}$ converges to the safety value function defined in (\ref{definition of Qh pi}), and the fixed point of operator $T_h$ converges to the safety value function defined in (\ref{definition of optimal Qh}).
\end{proposition}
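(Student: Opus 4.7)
My plan is to handle the three operators in the order $T_h^{\pi_h,\mu_h}$, $T_h^{\pi_h}$, $T_h$, since the fixed points of the latter two can be written as $\min_{\mu_h}$ and $\max_{\pi_h}\min_{\mu_h}$ of the first. By Proposition~\ref{deterministic policy alternative} I may restrict to deterministic $(\pi_h,\mu_h)$, so that the trajectory $\{x_0,x_1,\ldots\}$ from $(x,u,a)$ is deterministic. Let $Q_{\gamma_h}^{\pi_h,\mu_h}$ denote the unique fixed point of $T_h^{\pi_h,\mu_h}$. The heart of the proof is the closed form
\[
Q_{\gamma_h}^{\pi_h,\mu_h}(x,u,a)=\min_{k\ge 0}\Bigl[(1-\gamma_h)\sum_{t=0}^{k-1}\gamma_h^t h(x_t)+\gamma_h^k h(x_k)\Bigr],
\]
which I would obtain by repeatedly unrolling the recursion via the elementary identity
\[
\min\{a,\,(1-\gamma_h)a+\gamma_h b\}=(1-\gamma_h)a+\gamma_h\min\{a,b\},\quad\forall a,b\in\mathbb{R},
\]
and then verifying that the right-hand side satisfies the defining fixed-point equation of $T_h^{\pi_h,\mu_h}$.

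The limit argument then exploits that the coefficients $(1-\gamma_h),(1-\gamma_h)\gamma_h,\ldots,(1-\gamma_h)\gamma_h^{k-1},\gamma_h^k$ are nonnegative and sum to $1$, so each bracketed term is a convex combination of $h(x_0),\ldots,h(x_k)$ and is therefore bounded below by $\inf_{t\ge 0}h(x_t)=Q_h^{\pi_h,\mu_h}(x,u,a)$. For the matching upper bound, given $\varepsilon>0$ I would pick $k^\star$ with $h(x_{k^\star})\le Q_h^{\pi_h,\mu_h}(x,u,a)+\varepsilon/2$; the $k^\star$-th bracket differs from $h(x_{k^\star})$ by at most $2(1-\gamma_h^{k^\star})\|h\|_\infty$, which vanishes as $\gamma_h\uparrow 1$. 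Combining the two bounds yields the pointwise convergence $\lim_{\gamma_h\uparrow 1}Q_{\gamma_h}^{\pi_h,\mu_h}=Q_h^{\pi_h,\mu_h}$.

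To lift this to $T_h^{\pi_h}$ and $T_h$, I would first observe that for fixed $\pi_h$ the operator $T_h^{\pi_h}$ is the optimal Bellman operator of an MDP whose sole controller is the adversary, so its fixed point equals $\min_{\mu_h}Q_{\gamma_h}^{\pi_h,\mu_h}$ over deterministic $\mu_h$. For $T_h$, a direct policy-iteration argument works: take $\pi_h^\star,\mu_h^\star$ greedy with respect to the fixed point $\hat{Q}$ of $T_h$ and use the monotonicity from Theorem~\ref{monotone contraction of safety operators} to show both $Q_{\gamma_h}^{\pi_h^\star}\ge\hat{Q}$ and $Q_{\gamma_h}^{\pi_h}\le\hat{Q}$ for every $\pi_h$, which yields $\hat{Q}=\max_{\pi_h}\min_{\mu_h}Q_{\gamma_h}^{\pi_h,\mu_h}$. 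Because $\mathcal{X}$, $\mathcal{U}$, $\mathcal{A}$ are finite, the outer $\max$ and $\min$ range over finite policy sets, so the limit $\gamma_h\uparrow 1$ commutes with them; combining with the previous step and the definitions~(\ref{definition of Qh pi}) and~(\ref{definition of optimal Qh}) delivers the desired convergence to $Q_h^{\pi_h}$ and $Q_h^{*}$.

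The main obstacle is Step~1: guessing the correct closed form for $Q_{\gamma_h}^{\pi_h,\mu_h}$ and spotting that the distributive identity above is precisely what allows the $\min$ and the affine combination to be unrolled in tandem. A secondary subtlety is the Shapley-type characterization of the fixed point of $T_h$ in terms of deterministic policy-evaluation fixed points; justifying this rigorously, rather than merely appealing to the per-step saddle structure of the Bellman update, requires the sandwich/greedy policy-iteration argument sketched above, which in turn depends critically on the monotone contraction established in Theorem~\ref{monotone contraction of safety operators}.
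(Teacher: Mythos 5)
Your proof follows essentially the same route as the paper's: both exhibit an explicit formula for the fixed point of $T_h^{\pi_h,\mu_h}$ (your flat $\min_{k\ge0}$ over convex combinations of $h(x_0),\dots,h(x_k)$ is just the un-nested form of the paper's recursive expression $D$) and then let $\gamma_h\to1$. Your write-up is in fact more complete than the paper's, since the convex-combination sandwich bound and the greedy-policy/finite-commutation argument for $T_h^{\pi_h}$ and $T_h$ make rigorous the steps the paper only asserts as ``similar.''
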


\begin{proof}
    The key of proof is defining a discounted formulation of the safety values. Recall that $Q_h^{\pi_h, \mu_h}(x, u, a)=\min _{t \in \mathbb{N}}\left\{h\left(x_t\right)\right\}$, in which $h(x_t)$ denote the constraint values of the trajectory driven by $\pi_h$ and $\mu_h$. Define the following discounted value
    \begin{equation}
        \label{explicit form of safety value}
        \begin{aligned}
            D=&(1-\gamma_h) h(x_0)+\gamma_h {\left\{\operatorname {min} \left\{h(x_0),(1-\gamma_h) h(x_1)+\right.\right.} \\
            & \left.\gamma_h\left(\min \left\{h(x_1),(1-\gamma_h) h(x_2)+ \ldots\right)\right\}\right\}.
            \end{aligned}
    \end{equation}
    It can be verified that $D$ is the explicit formulation of the fixed point of $T_h^{\pi_h,\mu_h}$, i.e., $D=T_h^{\pi_h,\mu_h}(D)$. Taking the limit of (\ref{explicit form of safety value}) as $\gamma_h\rightarrow1$, we obatin
    \begin{equation}
        \nonumber
        \lim _{\gamma_h \rightarrow 1} D = \min _{t \in \mathbb{N}}\left\{h\left(x_t\right)\right\} = Q_h^{\pi_h, \mu_h}(x, u, a),
    \end{equation}
    which indicates that the fixed point of $T_h^{\pi_h,\mu_h}$ converges to the original safety value function defined in (\ref{definition of Qh pi mu}). The proof for $T_h^{\pi_h}$ and $T_h$ is similar.
\end{proof}

From now on, we assume that the chosen $\gamma_h$ is sufficiently close to $1$ and the fixed points of safety operators represent the original safety values.

\subsection{Problem Formulation}
In this subsection, we define a proper problem formulation for constrained zero-sum Markov games. Since safety is the top priority, the pursuit of high total rewards must be carried out under the condition that safety is guaranteed even with worst-case safety attacks. Given a state $x\in \mathcal{X}$, there are two cases. First, $x \notin S_{\rm{r}}^*$. The safety constraint $h(x)\geq 0$ will be violated sooner or later under the worst-case safety adversary. Therefore, it is pointless to optimize the total rewards of this kind of states. We should execute the safest action (i.e., optimize its safety value) and drive the system back to $S_{\rm{r}}^*$ as soon as possible. Second, $x \in S_{\rm{r}}^*$. We should maximize the total rewards and ensure persistent safety (i.e., do not choose those actions that could drive the system out of $S_{\rm{r}}^*$). We have the following definition and proposition.

\begin{definition}[invariant policy sets]
    \hfill
    \begin{enumerate}
        \item The invariant policy set $\Pi_{\rm{s}}^{\pi_h}$ specified by a robust invariant set $S_{\rm{r}}^{\pi_h}$ is defined as
        \begin{equation}
            \nonumber
            \Pi_{\rm{s}}^{\pi_h}=\left\{\pi \mathrel{}\middle|\mathrel{} \forall x\in S_{\rm{r}}^{\pi_h}, \pi(u\mid x)=0, \text{if } u\notin U_{\rm{s}}^{\pi_h}(x)\right\},
        \end{equation}
        in which $U^{\pi_h}_{\rm{s}}(x)$ contains the admissible action to maintain persistent safety at state $x\in S_{\rm{r}}^{\pi_h}$, i.e.,
        \begin{equation}
            \nonumber
            U_{\rm{s}}^{\pi_h}(x)=\left\{u \in \mathcal{U} \mathrel{}\middle|\mathrel{} \min _{a\in \mathcal{A}} Q_h^{\pi_h}(x,u,a)\geq 0\right\}.
        \end{equation}
        \item The optimal invariant policy set $\Pi_{\rm{s}}^{*}$ is defined as
        \begin{equation}
            \nonumber
            \Pi_{\rm{s}}^{*}=\left\{\pi \mathrel{}\middle|\mathrel{} \forall x\in S_{\rm{r}}^*, \pi(u\mid x)=0, \text{if } u\notin U_{\rm{s}}^*(x)\right\},
        \end{equation}
        in which $U^*_{\rm{s}}(x)$ contains the admissible action to maintain persistent safety at state $x\in S_{\rm{r}}^*$, i.e.,
        \begin{equation}
            \nonumber
            U_{\rm{s}}^*(x)=\left\{u \in \mathcal{U} \mathrel{}\middle|\mathrel{} \min _{a\in \mathcal{A}} Q_h^{*}(x,u,a)\geq 0\right\}.
        \end{equation}
    \end{enumerate}
\end{definition}

\begin{proposition}
    \hfill
    \label{forward invariance under invariant policy sets}
    \begin{enumerate}
        \item Given $x\in S_{\rm{r}}^{\pi_h}$, if the protagonist policy $\pi \in \Pi_{\rm{s}}^{\pi_h}$, the system will stay in $S_{\rm{r}}^{\pi_h}$ no matter what adversary policy $\mu$ is adopted.
        \item Given $x\in S_{\rm{r}}^{*}$, if the protagonist policy $\pi \in \Pi_{\rm{s}}^{*}$, the system will stay in $S_{\rm{r}}^{*}$ no matter what adversary policy $\mu$ is adopted.
    \end{enumerate}
\end{proposition}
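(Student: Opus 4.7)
The plan is to establish both items by a one-step induction that extracts safety of the successor state directly from the safety self-consistency conditions of Theorem \ref{self-consistency conditions of safety value functions}. The two items are structurally identical; I would state the argument for item 1 in detail and then indicate the single substitution needed for item 2.

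For item 1, fix $x_0 \in S_{\rm{r}}^{\pi_h}$, a task policy $\pi \in \Pi_{\rm{s}}^{\pi_h}$, and an arbitrary adversary policy $\mu$. Any action $u_0$ in the support of $\pi(\cdot\mid x_0)$ must lie in $U_{\rm{s}}^{\pi_h}(x_0)$ by the definition of $\Pi_{\rm{s}}^{\pi_h}$, so $\min_{a\in\mathcal{A}} Q_h^{\pi_h}(x_0,u_0,a)\ge 0$, and in particular $Q_h^{\pi_h}(x_0,u_0,a_0)\ge 0$ for the actual $a_0\sim\mu(\cdot\mid x_0)$. Unfolding (\ref{self-consistency condition of Qh pi}) at $(x_0,u_0,a_0)$ with $x_1=f(x_0,u_0,a_0)$ forces $\min_{a'\in\mathcal{A}} Q_h^{\pi_h}\bigl(x_1,\pi_h(x_1),a'\bigr)\ge 0$, hence $\max_{u'\in\mathcal{U}}\min_{a'\in\mathcal{A}} Q_h^{\pi_h}(x_1,u',a')\ge 0$, which is exactly $x_1\in S_{\rm{r}}^{\pi_h}$. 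Iterating this step closes the induction on the full trajectory. For item 2 the same bookkeeping goes through with the Bellman self-consistency (\ref{self-consistency condition of optimal Qh}) in place of (\ref{self-consistency condition of Qh pi}): from $Q_h^*(x_0,u_0,a_0)\ge 0$ one reads off $\max_{u'}\min_{a'} Q_h^*(x_1,u',a')\ge 0$ directly, giving $x_1\in S_{\rm{r}}^*$.

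The only delicate point I would underline is that in item 1 the closed-loop policy $\pi$ actually executed at $x_1$ is in general not the safety policy $\pi_h$ used to define $Q_h^{\pi_h}$, yet forward invariance still survives because membership in $S_{\rm{r}}^{\pi_h}$ demands only the \emph{existence} of an admissible protagonist action at each state, and the self-consistency condition hands one to us for free (namely $\pi_h(x_1)$). This decoupling of the task policy from the safety policy is where I expect the argument is most easily misread; the main obstacle, insofar as there is one, is phrasing this cleanly rather than any nontrivial estimate.
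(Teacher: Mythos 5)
Your proposal is correct and follows essentially the same route as the paper's own proof: use membership in the invariant policy set to get $\min_{a}Q_h^{\pi_h}(x,u,a)\ge 0$ for any executed action, unfold the safety self-consistency condition to conclude the successor state satisfies the max-min nonnegativity defining $S_{\rm{r}}^{\pi_h}$ (resp.\ $S_{\rm{r}}^{*}$), and iterate. Your explicit remark that only the \emph{existence} of an admissible action at the successor state is needed (supplied by $\pi_h$) is a point the paper leaves implicit, but the argument is the same.
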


\begin{proof}
    We only prove the first claim in Proposition \ref{forward invariance under invariant policy sets}. The proof for the second claim is similar.

    Given $x\in S_{\rm{r}}^{\pi_h}$, if the protagonist policy $\pi \in \Pi_{\rm{s}}^{\pi_h}$, we have
    \begin{equation}
        \nonumber
        Q_h^{\pi_h}(x,u,a)\geq 0, u\sim \pi(\cdot\mid x), \forall a\in \mathcal{A}.
    \end{equation}
    Utilizing the self-consistency condition (\ref{self-consistency condition of Qh pi}) for $Q_h^{\pi_h}$, we have
    \begin{equation}
        \nonumber
        Q_h^{\pi_h}(x,u,a) = \min \left\{h(x), \mathop{\min} \limits_{a^{\prime} \in \mathcal{A}} Q_h^{\pi_h}\left(x^{\prime}, \pi_h(x), a^{\prime}\right)\right\}.
    \end{equation}
    Therefore $\mathop{\min} \limits_{a^{\prime} \in \mathcal{A}} Q_h^{\pi_h}\left(x^{\prime}, \pi_h(x), a^{\prime}\right)\geq 0$ holds, which implies that $\mathop{\max} \limits_{u^{\prime} \in \mathcal{U}}\mathop{\min} \limits_{a^{\prime} \in \mathcal{A}} Q_h^{\pi_h}\left(x^{\prime}, u^{\prime}, a^{\prime}\right)\geq 0$. Based on the definition of $S_{\rm{r}}^{\pi_h}$, we have $x^{\prime}\in S_{\rm{r}}^{\pi_h}$. Utilizing this procedure recursively, we conclude that starting from $x\in S_{\rm{r}}^{\pi_h}$, if the protagonist policy $\pi \in \Pi_{\rm{s}}^{\pi_h}$, the system state will always be in $S_{\rm{r}}^{\pi_h}$ regardless of what adversary actions are taken.
\end{proof}

Proposition \ref{forward invariance under invariant policy sets} indicates that any policy belonging to an invariant policy set will keep the system inside the corresponding robust invariant set, thus achieving persistent safety. Essentially, the infinite-horizon safety constraint $h(x_t)\geq 0,t\in\mathbb{N}$ is transformed into state-dependent constraints on the action space $\mathcal{U}$ specified by invariant policy sets. To capture the performance optimality inside the robust invariant sets, we define the following induced zero-sum Markov games.

\begin{definition}[induced zero-sum Markov games]
    \hfil
    \begin{enumerate}
        \item Given the original constrained zero-sum Markov game $\mathcal{M}=\left(\mathcal{X}, \mathcal{U}, \mathcal{A}, f, r, h, \gamma\right)$ and a robust invariant set $S_{\rm{r}}^{\pi_h}$, the induced zero-sum Markov game is defined as $\mathcal{M}^{\pi_h}=\left(S_{\rm{r}}^{\pi_h}, \mathop{\bigcup}\limits_{x}{U_{\rm{s}}^{\pi_h}(x)}, \mathcal{A}, f, r, \gamma\right)$.
        \item Given the original constrained zero-sum Markov game $\mathcal{M}=\left(\mathcal{X}, \mathcal{U}, \mathcal{A}, f, r, h, \gamma\right)$ and the optimal robust invariant set $S_{\rm{r}}^{*}$, the optimal induced zero-sum Markov game is defined as $\mathcal{M}^{*}=\left(S_{\rm{r}}^{*}, \mathop{\bigcup}\limits_{x}{U_{\rm{s}}^{*}(x)}, \mathcal{A}, f, r, \gamma\right)$.
    \end{enumerate}
\end{definition}

The notation $\mathop{\bigcup}\limits_{x}{U_{\rm{s}}^{\pi_h}(x)}$ means that for each state $x$, the admissible state-dependent action space is $U_{\rm{s}}^{\pi_h}(x)$. Based on Proposition \ref{forward invariance under invariant policy sets}, it is easy to check that these games are well-defined. Also note that the induced zero-sum Markov games are unconstrained. As discussed before, it is only meaningful to optimize total rewards inside the maximal robust invariant set. We present the definitions of value functions as follows.

\begin{definition}[value functions]
    \hfill
    \label{value functions}
    \begin{enumerate}
        \item The value function of a protagonist policy $\pi$ and an adversary policy $\mu$ is defined as
            \begin{equation}
                \label{definition of Q pi mu}
                Q^{\pi, \mu}(x, u, a)=\sum_{t=0}^{\infty} \gamma^t r(x_t,u_t,a_t),
            \end{equation}
        in which $x_0=x,u_0=u,a_0=a$ and for $t\geq1$, $x_t=f(x_{t-1},u_{t-1},a_{t-1}),u_t\sim \pi \left( \cdot \mid x_t \right) , a_t\sim \mu \left( \cdot \mid x_t \right)$.
        \item The value function of a protagonist policy $\pi$ is defined as
            \begin{equation}
                \label{definition of Q pi}
                Q^{\pi}(x, u, a)=\min _{\mu} Q^{\pi, \mu}(x, u, a).
            \end{equation}
        \item For the optimal induced zero-sum Markov game $\mathcal{M}^{*}=\left(S_{\rm{r}}^{*}, \mathop{\bigcup}\limits_{x}{U_{\rm{s}}^{*}(x)}, \mathcal{A}, f, r, \gamma\right)$, the optimal value function is defined as
            \begin{equation}
                \label{definition of optimal Q}
                \begin{aligned}
                    Q^{*}(x, u, a)&=\max _{\pi \in \Pi_{\rm{s}}^*} Q^{\pi}(x,u,a)\\&=\max _{\pi \in \Pi_{\rm{s}}^*}\min _{\mu} Q^{\pi, \mu}(x, u, a).
                \end{aligned}
            \end{equation}
    \end{enumerate}
\end{definition}

As shown in (\ref{definition of optimal Q}), the optimal value function is defined for states inside the maximal robust invariant set $S_{\rm{r}}^*$ (i.e., for the optimal induced zero-sum Markov games) and the optimal policy is searched inside the optimal invariant policy set $\Pi_{\rm{s}}^*$, which contains all the policies that are able to maintain persistent safety of the system under worst-case safety attacks. Let $d$ denote the initial state distribution. Based on the definitions of value functions and safety value functions, we formally specify the problem formulation for constrained zero-sum Markov games, in which the objective for protagonist is twofold.

\begin{equation}
    \label{problem formulation for constrained zero-sum Markov games}
    \begin{gathered}
        \max _\pi \mathop{\mathbb{E}}\limits_{x_0\sim d}\left\{V^{\pi}(x_0)\cdot \mathbb{1}_{S_{\rm{r}}^*}(x_0)+V_h^{\pi}(x_0)\cdot \mathbb{1}_{\mathcal{X}\setminus S_{\rm{r}}^*}(x_0)\right\} \\
        \begin{aligned}
            \text { s.t. \quad }&x_{t+1}=f\left(x_t, u_t, a_t\right), u_t\sim \pi\left( \cdot \mid x_t \right), t\geq0,\\& \min _{a\in \mathcal{A}} Q_h^*\left(x_t, u_t, a\right) \geq 0, \ \forall x_t\in S_{\rm{r}}^*.
        \end{aligned}
    \end{gathered}
\end{equation}
$Q_h^*$ represents the fixed point of the safety Bellman operator $T_h$, as defined in (\ref{safety Bellman operator}). $\mathbb{1}_{S_{\rm{r}}^*}$ represents the indicator function, i.e., $\mathbb{1}_{S_{\rm{r}}^*}(x)=1$ when $x\in S_{\rm{r}}^*$ and otherwise $\mathbb{1}_{S_{\rm{r}}^*}(x)=0$. $V^{\pi}$ and $V_h^{\pi}$ represent the state-value function and the safety state-value function, i.e.,
\begin{equation}
    \nonumber
    V^{\pi}(x)=\sum_{u\in \mathcal{U}}\pi(u\mid x) \min_{a\in \mathcal{A}} Q^{\pi}(x,u,a),
\end{equation}
\begin{equation}
    \nonumber
    V_h^{\pi}(x)=\max _{u\in \mathcal{U}} \min _{a\in \mathcal{A}} Q_h^{\pi}(x,u,a).
\end{equation}

The value functions in Definition \ref{value functions} are defined on infinite horizon, so they naturally hold a recursive structure, which we refer to as the self-consistency condition. We present the self-consistency conditions of value functions in the following theorem and define the corresponding operators.

\begin{theorem}[performance self-consistency conditions]
    \label{self-consistency conditions of value functions}
    The value functions satisfy the following self-consistency conditions:
    \begin{align}
        &\begin{aligned}\label{self-consistency condition of Q pi mu}
            &Q^{\pi ,\mu}(x,u,a)=r(x,u,a)\\&+\gamma\sum_{u^{\prime}\in \mathcal{U}}{\pi}\left( u^{\prime}\mid x^{\prime} \right) \sum_{a^{\prime}\in \mathcal{A}}{\mu}\left( a^{\prime}\mid x^{\prime} \right) Q^{\pi ,\mu}\left( x^{\prime},u^{\prime},a^{\prime} \right),
        \end{aligned}\\
        &\begin{aligned}\label{self-consistency condition of Q pi}
            &Q^{\pi}(x,u,a)=r(x,u,a)\\&+\gamma\sum_{u^{\prime}\in \mathcal{U}}{\pi}\left( u^{\prime}\mid x^{\prime} \right) \min_{a^{\prime}\in \mathcal{A}} Q^{\pi}\left( x^{\prime},u^{\prime},a^{\prime} \right),
        \end{aligned}\\
        &\begin{aligned}\label{self-consistency condition of optimal Q}
            &Q^*(x,u,a)=r(x,u,a)\\&+\gamma \max_{\pi(\cdot\mid x^{\prime}) \in \Pi_{\rm{s}}^*}\sum_{u^{\prime}\in \mathcal{U}}{\pi}\left( u^{\prime}\mid x^{\prime} \right) \min_{a^{\prime}\in \mathcal{A}} Q^*\left( x^{\prime},u^{\prime},a^{\prime} \right),
        \end{aligned}
    \end{align}
    in which $x^{\prime}=f(x,u,a)$. (\ref{self-consistency condition of optimal Q}) only applies to the optimal induced zero-sum Markov game $\mathcal{M}^*$.
\end{theorem}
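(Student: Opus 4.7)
The plan is to prove the three identities in sequence, since each leverages the previous one together with Bellman's principle of optimality, and the main novelty over textbook Markov-game arguments lies only in the constrained maximization appearing in \eqref{self-consistency condition of optimal Q}.

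For \eqref{self-consistency condition of Q pi mu} I would start directly from the definition \eqref{definition of Q pi mu}. Writing $Q^{\pi,\mu}(x,u,a)=r(x,u,a)+\gamma\sum_{t=1}^{\infty}\gamma^{t-1}r(x_t,u_t,a_t)$, I split off the first reward, shift the time index, and take expectations over $u_1\sim\pi(\cdot\mid x')$ and $a_1\sim\mu(\cdot\mid x')$ with $x'=f(x,u,a)$. The remaining tail is, by the Markov property and the definition of $Q^{\pi,\mu}$, exactly $Q^{\pi,\mu}(x',u_1,a_1)$, which yields the desired recursion. This step is essentially the standard Markov-game Bellman expansion and involves no subtlety.

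For \eqref{self-consistency condition of Q pi} I would apply $\min_\mu$ to \eqref{self-consistency condition of Q pi mu}. Because $(x,u,a)$ is fixed, $r(x,u,a)$ passes outside. For the tail, I would invoke Bellman's principle of optimality: for fixed $\pi$ the adversary faces a standard MDP whose value function is $Q^\pi$, so an optimal adversary response can be taken Markov and deterministic, and the overall minimum over $\mu$ decomposes into the minimization of $\mu(\cdot\mid x')$ (pointwise at $x'$) plus the minimization of the subsequent adversary behavior (which evaluates to $Q^\pi$). Since $\mu(\cdot\mid x')$ enters linearly and the coefficient is $\sum_{u'}\pi(u'\mid x')Q^\pi(x',u',a')$, its minimum over distributions on $\mathcal{A}$ is attained at a Dirac mass, giving the $\min_{a'\in\mathcal{A}}$ outside the protagonist sum.

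For \eqref{self-consistency condition of optimal Q}, restricted to $\mathcal{M}^*$, I would apply $\max_{\pi\in\Pi_{\rm s}^*}$ to \eqref{self-consistency condition of Q pi}. Again $r(x,u,a)$ is a constant, so it suffices to maximize the tail. The key observation, which is the one non-routine step and the main obstacle, is that the constraint $\pi\in\Pi_{\rm s}^*$ decomposes pointwise: by definition $\Pi_{\rm s}^*$ only restricts the support of $\pi(\cdot\mid x')$ to $U_{\rm s}^*(x')$ for each $x'\in S_{\rm r}^*$, and by Proposition \ref{forward invariance under invariant policy sets} the successor state $x'=f(x,u,a)$ lies in $S_{\rm r}^*$ whenever $x\in S_{\rm r}^*$ and $u\in U_{\rm s}^*(x)$. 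Consequently one can invoke Bellman's principle of optimality again: optimally choosing $\pi(\cdot\mid x')$ in $\Pi_{\rm s}^*$ at the current successor, while separately using the already-optimal $Q^*$ for the remaining horizon, achieves the overall maximum. This produces $\max_{\pi(\cdot\mid x')\in\Pi_{\rm s}^*}\sum_{u'}\pi(u'\mid x')\min_{a'}Q^*(x',u',a')$, completing the recursion. I would close by remarking that the whole argument hinges on forward invariance, without which the induced game $\mathcal{M}^*$ would not be well-defined and the decomposition of the max would fail.
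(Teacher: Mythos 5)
Your overall route --- expand $Q^{\pi,\mu}$ by splitting off the first reward, then push $\min_\mu$ and $\max_{\pi\in\Pi_{\rm s}^*}$ through the recursion via Bellman's principle of optimality, using forward invariance (Proposition \ref{forward invariance under invariant policy sets}) to make the constrained maximization in (\ref{self-consistency condition of optimal Q}) decompose pointwise over $x'\in S_{\rm r}^*$ --- is the same route the paper takes. The paper's proof merely cites the standard zero-sum Markov game results for (\ref{self-consistency condition of Q pi mu})--(\ref{self-consistency condition of Q pi}) and invokes the well-definedness of $\mathcal{M}^*$ for (\ref{self-consistency condition of optimal Q}), so your explicit treatment of the induced game and of why the support constraint localizes at $x'$ is a useful elaboration of the one genuinely non-standard step.

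There is, however, a concrete gap in your step for (\ref{self-consistency condition of Q pi}): your argument does not land on the displayed identity. Minimizing the linear functional $\sum_{a^{\prime}}\mu(a^{\prime}\mid x^{\prime})\bigl[\sum_{u^{\prime}}\pi(u^{\prime}\mid x^{\prime})Q^{\pi}(x^{\prime},u^{\prime},a^{\prime})\bigr]$ over $\mu(\cdot\mid x^{\prime})\in\Delta(\mathcal{A})$ yields, exactly as you write, $\min_{a^{\prime}\in\mathcal{A}}\sum_{u^{\prime}}\pi(u^{\prime}\mid x^{\prime})Q^{\pi}(x^{\prime},u^{\prime},a^{\prime})$, i.e., the minimum \emph{outside} the protagonist sum. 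But the theorem places $\min_{a^{\prime}}$ \emph{inside} the sum, and in general $\sum_{u^{\prime}}\pi(u^{\prime}\mid x^{\prime})\min_{a^{\prime}}Q^{\pi}(x^{\prime},u^{\prime},a^{\prime})\le\min_{a^{\prime}}\sum_{u^{\prime}}\pi(u^{\prime}\mid x^{\prime})Q^{\pi}(x^{\prime},u^{\prime},a^{\prime})$ with strict inequality whenever no single $a^{\prime}$ is simultaneously worst for every $u^{\prime}$ in the support of $\pi(\cdot\mid x^{\prime})$. The inner-min form corresponds to an adversary that observes the realized $u^{\prime}$ before acting, i.e., $\mu:\mathcal{X}\times\mathcal{U}\rightarrow\Delta(\mathcal{A})$, which is not the definition of $\mu$ in force; so you have either proved a different identity or left the reconciliation unaddressed. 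The same ordering issue propagates into your step for (\ref{self-consistency condition of optimal Q}). Admittedly this tension is partly inherited from the paper itself (compare Remark \ref{matrix game}, where the minimum sits outside the sum, with the definition of $V^{\pi}$, where it sits inside), and the paper's citation-only proof never confronts it; but a self-contained proof must either state the information structure under which the two orderings coincide or adapt the argument to the inner-min form, and yours does neither.
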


\begin{proof}
    (\ref{self-consistency condition of Q pi mu}) and (\ref{self-consistency condition of Q pi}) are the same as the standard self-consistency conditions in zero-sum Markov games \cite{perolat2015approximate}. Since the optimal induced zero-sum Markov game $\mathcal{M}^*$ is well-defined, (\ref{self-consistency condition of optimal Q}) follows from considering the Bellman equation on $\mathcal{M}^*$.
\end{proof}

\begin{definition}[performance operators]
    \label{performance operators}
    The self-consistency operator of a pair of protagonist policy $\pi_h$ and adversary policy $\mu_h$ is defined as
    \begin{equation}
        \label{performance self-consistency operator of pi mu}
        \begin{aligned}
            &[T^{\pi ,\mu}(Q)](x,u,a)=r(x,u,a)\\&+\gamma\sum_{u^{\prime}\in \mathcal{U}}{\pi}\left( u^{\prime}\mid x^{\prime} \right) \sum_{a^{\prime}\in \mathcal{A}}{\mu}\left( a^{\prime}\mid x^{\prime} \right) Q\left( x^{\prime},u^{\prime},a^{\prime} \right).
        \end{aligned}
    \end{equation}
    The self-consistency operator of a protagonist policy $\pi_h$ is defined as
    \begin{equation}
        \label{performance self-consistency operator of pi}
        \begin{aligned}
            &[T^{\pi}(Q)](x,u,a)=r(x,u,a)\\&+\gamma\sum_{u^{\prime}\in \mathcal{U}}{\pi}\left( u^{\prime}\mid x^{\prime} \right) \min_{a^{\prime}\in \mathcal{A}} Q\left( x^{\prime},u^{\prime},a^{\prime} \right).
        \end{aligned}
    \end{equation}
    The Bellman operator on the optimal induced zero-sum Markov game $\mathcal{M}^*$ is defined as
    \begin{equation}
        \label{constrained Bellman operator}
        \begin{aligned}
            &[T(Q)](x,u,a)=r(x,u,a)\\&+\gamma \max_{\pi(\cdot\mid x^{\prime}) \in \Pi_{\rm{s}}^*}\sum_{u^{\prime}\in \mathcal{U}}{\pi}\left( u^{\prime}\mid x^{\prime} \right) \min_{a^{\prime}\in \mathcal{A}} Q\left( x^{\prime},u^{\prime},a^{\prime} \right).
        \end{aligned}
    \end{equation}
    
\end{definition}

\subsection{Dual Policy Iteration}

A key issue of problem (\ref{problem formulation for constrained zero-sum Markov games}) is that we do not know the maximal robust invariant set $S_{\rm{r}}^*$ or the optimal safety value function $Q_h^*$ in advance. To address this challenge, we propose a dual policy iteration scheme, which simultaneously optimizes two policies: the task policy and the safety policy. The safety policy seeks the highest safety values. The task policy seeks the highest rewards inside the robust invariant set specified by the safety policy and seeks the highest safety value outside the robust invariant set. The pseudo-code of dual policy iteration is presented in Algorithm \ref{dual policy iteration}. Note that for the first time of task policy evaluation, we check that $\mathop{\max}\limits_{x\in \mathcal{X}}\mathop{\max}\limits_{u\in \mathcal{U}} \mathop{\min}\limits_{a\in \mathcal{A}} Q_h^{\pi_h}(x,u,a)\geq 0$ holds, otherwise go back to safety policy evaluation.

\begin{algorithm}
    \label{dual policy iteration}
    \caption{Dual Policy Iteration}
    \KwIn{initial task policy $\pi$, initial safety policy $\pi_h$.}
    \For{$m$ times}{

        \For{$n$ times}{
            (safety policy evaluation)

            Solve for $Q_h^{\pi_h}$ such that $T_h^{\pi_h}(Q_h^{\pi_h})=Q_h^{\pi_h}$.

            (safety policy improvement)

            \For{each $x \in \mathcal{X}$}{
                $\pi_h (x)\leftarrow \mathop{\argmax}\limits_{u\in \mathcal{U}}\left\{\mathop{\min}\limits_{a \in \mathcal{A}} Q_h^{\pi_h}(x,u,a)\right\}$.
            }
        }

        (task policy evaluation)

        Solve for $Q^{\pi}$ such that $T^{\pi}(Q^{\pi})=Q^{\pi}$.

        (task policy improvement)

        \For{each $x \in S_{\rm{r}}^{\pi_h}$}{
            
            \begin{flalign*}
                \hspace{-1.5mm}&\pi (\cdot\mid x)\leftarrow \mathop{\argmax}\limits_{\pi (\cdot\mid x)\in \Pi_{\rm{s}}^{\pi_{h}}}\mathop{\min}\limits_{a \in \mathcal{A}}\mathop{\sum}\limits_{u\in \mathcal{U}}\pi(u\mid x)Q^{\pi}(x,u,a).&
            \end{flalign*}
        }

        \For{each $x \notin S_{\rm{r}}^{\pi_h}$}{
            $\pi (x)\leftarrow \pi_h(x)$.
        }
    }
\end{algorithm}

\begin{remark}
    \label{matrix game}
    The task policy improvement on state $x \in S_{\rm{r}}^{\pi_h}$ is equivalent to the following linear programming problem:
    \begin{equation}
        \label{linear programming formulation}
        \begin{gathered}
            \max _{(\pi(\cdot\mid x),c)} \ c \\
            \begin{aligned}
                \text { s.t. \quad }&\mathop{\sum}\limits_{u\in \mathcal{U}}\pi(u\mid x)Q^{\pi}(x,u,a)\geq c,\ \forall a\in\mathcal{A},\\& \mathop{\sum}\limits_{u\in \mathcal{U}}\pi(u\mid x)=1,\\& \pi(u\mid x)\geq0, \ \forall u\in U_{\rm{s}}^{\pi_h}(x),\\& \pi(u\mid x)=0, \ \forall u\notin U_{\rm{s}}^{\pi_h}(x).
            \end{aligned}
        \end{gathered}
    \end{equation}
    Note that $Q^{\pi}$ in (\ref{linear programming formulation}) is a constant, which is obtained by previous task policy evaluation.

\end{remark}

\begin{remark}
    In dual policy iteration, the task policy can be stochastic while the safety policy is restricted to the class of deterministic policies. As shown in Proposition \ref{deterministic policy alternative}, only considering deterministic policies when solving for $Q_h^*$ does not lose any optimality. In task policy improvement, for states inside the robust invariant set, it is equivalent to finding the Nash equilibrium of a matrix game, as shown in Remark \ref{matrix game}, which requires the task policy to be stochastic to guarantee the existence of a Nash equilibrium. For states outside the robust invariant set, we only optimize their safety values, so we directly copy the deterministic safety policy to the task policy: $\pi (x)\leftarrow \pi_h(x)$.
\end{remark}

\begin{remark}
    The hyperparameter $n$ in Algorithm \ref{dual policy iteration} controls the relative update frequency between the task policy and the safety policy. An extreme case is that by choosing an $n$ sufficiently large, we have optimal safety value $Q_h^*$ and the maximal robust invariant set $S_{\rm{r}}^*$ precomputed before optimizing the task policy. This case works well in the tabular setting. However, its corresponding deep RL algorithm suffers from the distribution mismatch issue, i.e., the pretrained $Q_h$ and the task policy $\pi$ are on different data basis, which may lead to poor algorithm performance.
\end{remark}

We prove that both policies converge to the optimal ones in the following theorem.

\begin{theorem}[convergence of dual policy iteration]
    \label{convergence of dual policy iteration}
    By choosing an $m$ sufficiently large in Algorithm \ref{dual policy iteration}, the task policy $\pi$ converges to the solution of problem (\ref{problem formulation for constrained zero-sum Markov games}) and the safety value $Q_h^{\pi_h}$ of the safety policy $\pi_h$ converges to the fixed point of safety Bellman operator $T_h$.
\end{theorem}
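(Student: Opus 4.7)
The strategy is to decouple the convergence analysis into two phases, exploiting the nested structure of Algorithm~\ref{dual policy iteration}. First I would show that the safety side (the inner loop together with its outer repetition) stabilizes at an optimal safety policy $\pi_h^*$ whose value is the fixed point of $T_h$, and then I would argue that once $\pi_h$ is frozen, the task side reduces to standard minimax policy iteration on the well-defined unconstrained induced game $\mathcal{M}^*$.

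For the safety side I would invoke Theorem~\ref{monotone contraction of safety operators}. Each pass of the inner loop behaves as a classical policy-iteration step: since $T_h^{\pi_h}$ is a $\gamma_h$-contraction, safety evaluation yields the unique fixed point $Q_h^{\pi_h}$; the greedy update $\pi_h'(x)=\argmax_u\min_a Q_h^{\pi_h}(x,u,a)$ satisfies $T_h^{\pi_h'}(Q_h^{\pi_h}) \geq T_h^{\pi_h}(Q_h^{\pi_h}) = Q_h^{\pi_h}$, and iterating the monotone operator $T_h^{\pi_h'}$ from $Q_h^{\pi_h}$ yields the pointwise bound $Q_h^{\pi_h'} \geq Q_h^{\pi_h}$. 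The resulting sequence of safety values is therefore monotone, uniformly dominated by the fixed point of $T_h$, and drawn from the finite set of deterministic protagonist policies, so it stabilizes in finitely many outer steps. At stabilization $\pi_h^*$ is greedy with respect to its own value, which is equivalent to $Q_h^{\pi_h^*}$ being a fixed point of $T_h$; by uniqueness, $Q_h^{\pi_h^*} = Q_h^*$, so $S_{\rm r}^{\pi_h^*} = S_{\rm r}^*$ and $\Pi_{\rm s}^{\pi_h^*} = \Pi_{\rm s}^*$. The standing assumption $\max_x\max_u\min_a Q_h^*(x,u,a) \geq 0$ together with monotone improvement guarantees that the initial-check condition in Algorithm~\ref{dual policy iteration} is eventually met, so the task branch is activated.

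For the task side, once the safety policy has stabilized at $\pi_h^*$, Proposition~\ref{forward invariance under invariant policy sets} makes the induced game $\mathcal{M}^*$ well-defined and the admissible sets $U_{\rm s}^*(x)$ stop changing across outer iterations. On states in $S_{\rm r}^*$, the improvement step is precisely the Nash-equilibrium computation of Remark~\ref{matrix game}; combined with task policy evaluation (which converges because $T^\pi$ is a $\gamma$-contraction), this is the minimax policy iteration of \cite{perolat2015approximate} on $\mathcal{M}^*$. That procedure produces task values approaching the unique fixed point of the Bellman operator in (\ref{constrained Bellman operator}), namely $Q^*$, so the task policy on $S_{\rm r}^*$ tends to a solution of the first part of the objective in (\ref{problem formulation for constrained zero-sum Markov games}). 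On states outside $S_{\rm r}^*$, the copy rule $\pi(x) \leftarrow \pi_h^*(x)$ makes $V_h^{\pi}(x) = V_h^{\pi_h^*}(x) = \max_u\min_a Q_h^*(x,u,a)$, attaining the maximum of the second term in the twofold objective.

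The principal obstacle I expect is the transient phase in which the safety policy is still evolving: during those outer iterations the admissible sets $U_{\rm s}^{\pi_h}(x)$ and the invariant set $S_{\rm r}^{\pi_h}$ shift, so the task improvement step is non-stationary and the standard monotone-improvement argument for the task value cannot be applied uniformly across all outer iterations. My plan circumvents this by using the finite-time stabilization of $\pi_h$ established above: after a bounded prefix of outer iterations the safety branch is effectively frozen, leaving the remainder as task-only iterations on the fixed game $\mathcal{M}^*$, which is the regime where the classical minimax convergence results apply. Choosing $m$ large enough to cover both the safety-stabilization prefix and a further tail sufficient for task convergence completes the argument.
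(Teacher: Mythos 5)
Your proposal is correct and follows the same two-part decomposition as the paper's proof: the safety side is handled by the monotone-contraction chain $Q_h^{\pi_h^k}\le T_h(Q_h^{\pi_h^k})\le Q_h^{\pi_h^{k+1}}\le Q_h^*$ from Theorem~\ref{monotone contraction of safety operators}, and the task side is handled by reducing the constrained improvement step to unconstrained minimax policy iteration on the induced game. The one place you genuinely diverge is in how the interleaving is treated: the paper argues directly that $S_{\rm r}^{\pi_h}$ and $U_{\rm s}^{\pi_h}(x)$ are monotonically expanding toward $S_{\rm r}^*$ and $U_{\rm s}^*(x)$ and then asserts that $Q^\pi$ converges to the fixed point of the operator in (\ref{constrained Bellman operator}), without explicitly addressing the non-stationarity of the admissible sets during the transient. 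You instead invoke finiteness of the deterministic safety-policy class to get stabilization of $\pi_h$ after a bounded prefix of iterations, after which the task iterations run on the fixed game $\mathcal{M}^*$ and classical convergence applies. Your version buys a cleaner justification of exactly the step the paper glosses over (it isolates the transient and disposes of it by a finite-time argument), at the cost of relying on finiteness of $\mathcal{X}$, $\mathcal{U}$, $\mathcal{A}$ — which the paper does assume — whereas the paper's expanding-set phrasing is closer in spirit to what survives in the function-approximation setting of DRAC. Both arguments are sound for the tabular statement being proved.
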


\begin{proof}
    First we prove the convergence of safety policy $\pi_h$. $Q_h$ is regarded as a vector in Euclidean space, i.e., $Q_h \in \mathbb{R}^{|\mathcal{X}|\cdot|\mathcal{U}|\cdot|\mathcal{A}|}$. Let $k$ denote the iteration number of the safety policy.
We set out to prove the following relationship:
\begin{equation}
    \label{recursion for monotone convergence}
    Q_{h}^{\pi_h ^k}\le T_h\left( Q_{h}^{\pi_h ^k} \right) \le Q_{h}^{\pi_h ^{k+1}}\le Q_{h}^{*}.
\end{equation}
Based on the definition of $T_h$ and safety policy improvement, we have
\begin{equation}
    \nonumber
    T_{h} \left( Q_h^{\pi_h ^k}\right)=T_h^{\pi_h^{k+1}}\left(Q_h^{\pi_h ^k}\right) \geq T_h^{\pi_h ^k}\left(Q_h^{\pi_h ^k}\right)=Q_h^{\pi_h ^k}.
\end{equation}
Since $Q_h^{\pi_h ^k} \leq T_h^{\pi_h^{k+1}}\left(Q_h^{\pi_h ^k}\right)$, using the monotone contraction of $T_h^{\pi_h^{k+1}}$, we have
\begin{equation}
    \nonumber
    \begin{aligned}
        Q_h^{\pi_h ^k} &\leq T_h\left(Q_h^{\pi_h ^k}\right)=T_h^{\pi_h^{k+1}}\left(Q_h^{\pi_h ^k}\right) \\&\leq\left(T_h^{\pi_h^{k+1}}\right)^{\infty}\left(Q_h^{\pi_h ^k}\right)=Q_h^{\pi_h^{k+1}}.
    \end{aligned}
\end{equation}
Since $T_h\left(Q_h^{\pi_h^{k+1}}\right) \geq T_h^{\pi_h^{k+1}}\left(Q_h^{\pi_h^{k+1}}\right)=Q_h^{\pi_h^{k+1}}$, using the monotone contraction of $T_h$, we obtain
\begin{equation}
    \nonumber
    Q_h^*=(T_h)^{\infty}\left(Q_h^{\pi_h^{k+1}}\right) \geq \cdots \geq Q_h^{\pi_h^{k+1}}.
\end{equation}
So we conclude that (\ref{recursion for monotone convergence}) holds. The safety value sequence $\left\{Q_h^{\pi_h ^k}\right\}$ is monotone and bounded, so it converges. After the safety policy convergences, we have $Q_h^{\pi_h ^k}=T_h\left(Q_h^{\pi_h ^k}\right)=Q_h^{\pi_h^{k+1}}$, indicating that the sequence $\left\{Q_h^{\pi_h ^k}\right\}$ converges to the fixed point of $T_h$. The monotonicity of the safety value sequence also implies that the robust invariant set $S_r^{\pi_h}$ is expanding.

Next we prove the convergence of task policy $\pi$. Let $j$ denote the iteration number of the task policy. The task policy improvement for $x\in S_{\rm{r}}^{\pi_h}$ is equivalent to a standard policy improvement on the induced zero-sum Markov game $\mathcal{M}^{\pi_h}=\left(S_{\rm{r}}^{\pi_h}, \mathop{\bigcup}\limits_{x}{U_{\rm{s}}^{\pi_h}(x)}, \mathcal{A}, f, r, \gamma\right)$. Therefore, we have
\begin{equation}
    \nonumber
    Q^{\pi_{j+1}}(x,u,a)\geq Q^{\pi_{j}}(x,u,a), \forall x\in S_{\rm{r}}^{\pi_h}, u\in U_{\rm{s}}^{\pi_h}(x), a\in\mathcal{A}.
\end{equation}
As the iteration of safety policy $\pi_h$ goes on, $S_{\rm{r}}^{\pi_h}$ and the corresponding $U_{\rm{s}}^{\pi_h}(x)$ are expanding and converging to $S_{\rm{r}}^{*}$ and $U_{\rm{s}}^{*}(x)$. We can conclude that the value function $Q^{\pi}$ on $S_{\rm{r}}^{*}$ will converge to the fixed point of the Bellman operator on the optimal induced zero-sum Markov game $\mathcal{M}^*$ (\ref{constrained Bellman operator}). For states outside the maximal robust invariant set, i.e., $x\notin S_{\rm{r}}^*$, the task policy directly copies the safety policy, so its convergence is the same as the safety policy. In summary, for states inside the maximal robust invariant set, the converged task policy seeks the highest total rewards under the condition of persistent safety is guaranteed; for states outside the maximal robust invariant set, the converged task policy seeks the highest constraint values as the converged safety policy does. Therefore, the iteration of task policy converges to the solution of problem (\ref{problem formulation for constrained zero-sum Markov games}).
\end{proof}

\begin{remark}
    We highlight that during the iteration, the robust invariant set of the task policy is also non-shrinking. Since the task policy improvement searches for a new task policy $\pi$ in the invariant policy set $\Pi_{\rm{s}}^{\pi_h}$, the robust invariant set $S_{\rm{r}}^{\pi}$ of the task policy $\pi$ is at least the same size as $S_{\rm{r}}^{\pi_h}$, i.e., $x \in S_{\rm{r}}^{\pi_h} \rightarrow x\in S_{\rm{r}}^{\pi}$. This merit contributes to the stability of the training process of our proposed DRAC.
\end{remark}

\begin{remark}
    Our proposed dual policy iteration also yields a sound solution for the common safe RL problem, when disturbances do not exist. The robust invariant sets degenerate to the standard invariant sets. Both safety policy and task policy also converge to the optimal ones in the no-disturbance case.
\end{remark}

\section{Dually Robust Actor-Critic}

Based on the proposed dual policy iteration scheme, in this section, we present dually robust actor-critic (DRAC), a deep RL algorithm that can learn one policy that is safe and simultaneously robust to both performance and safety attacks. Our algorithm is built on top of soft actor-critic (SAC) \cite{haarnoja2018soft}, a well-known model-free off-policy RL algorithm.

We denote the task policy network as $\pi(x;\theta)$ and the safety policy network as $\pi_h(x;\phi)$. Since the $\mathop{\min}\limits_{\mu}$ operations in both task policy evaluation and safety policy evaluation are hard to conduct for continuous state and action spaces, we train two adversary networks: the performance adversary network $\mu(x;\beta)$ and the safety adversary network $\mu_h(x;\xi)$. The task policy and performance adversary are stochastic, while the safety policy and safety adversary are deterministic. We follow the double Q-network design in SAC and make use of two performance value networks, denoted as $Q(x,u,a;\omega_1)$ and $Q(x,u,a;\omega_2)$. The safety value network is denoted as $Q_h(x,u,a;\psi)$.

For a set $\mathcal{D}$ of collected samples, the loss functions of performance value networks are
\begin{equation}
    \nonumber
    L_Q\left(\omega_i\right)=\mathbb{E}_{\left(x, u, a, r, x^{\prime}\right) \sim \mathcal{D}}\left\{\left(Q(x,u,a;\omega_i)-\hat{Q}\right)^2\right\},
\end{equation}
where $i={1,2}$ and
\begin{equation}
    \nonumber
    \hat{Q}=r(x, u, a)+\gamma\left( Q(x^{\prime},u^{\prime}, a^{\prime};\hat{\omega}_j)-\alpha \log \pi\left(u^{\prime}|x^{\prime};\theta\right)\right),
\end{equation}
in which $j$ is randomly chosen from $\left\{1,2\right\}$, $\hat{\omega}_j$ denotes the target nework parameters, $u^{\prime}\sim \pi(\cdot\mid x^{\prime};\theta)$, $a^{\prime} = \mu(\cdot\mid{x^{\prime};\beta})$ and $\alpha$ denotes the temperature. The loss function for the temperature $\alpha$ is
\begin{equation}
    \nonumber
    L(\alpha)=\mathbb{E}_{x \sim \mathcal{D}}\left\{-\alpha \log \pi(u | x;\theta)-\alpha \mathcal{H}\right\},
\end{equation}
in which $\mathcal{H}$ represents the target entropy and $u\sim \pi(\cdot\mid x;\theta)$. The loss function of safety value network is
\begin{equation}
    \nonumber
    L_{Q_h}(\psi) = \mathbb{E}_{(x,u,a,h,x^{\prime}) \sim \mathcal{D}}\left\{\left(Q_h(x,u,a ; \psi)-\hat{Q}_h\right)^2\right\},
\end{equation}
where
\begin{equation}
    \nonumber
    \hat{Q}_h = (1-\gamma_h)h(x) + \gamma_h \min \left\{h(x), Q_h\left(x^{\prime}, u^{\prime}, a^{\prime};\hat{\psi}\right)\right\},
\end{equation}
in which $u^{\prime}= \pi_h(x^{\prime};\phi)$, $a^{\prime}= \mu_h({x^{\prime};\xi})$ and $\hat{\psi}$ denote the target network parameters. The loss functions of the safety policy and the safety adversary are
\begin{equation}
    \nonumber
    L_{\pi_h}(\phi)=-\mathbb{E}_{x \sim \mathcal{D}}\left\{Q_h\left(x,u,a ; \psi\right)\right\},
\end{equation}
\begin{equation}
    \nonumber
    L_{\mu_h}(\xi)=\mathbb{E}_{x \sim \mathcal{D}}\left\{Q_h\left(x,u,a ; \psi\right)\right\},
\end{equation}
in which $u= \pi_h(x;\phi)$ and $a= \mu_h({x;\xi})$. The loss function of the performance adversary is
\begin{equation}
    \nonumber
    L_{\mu}(\beta)=\mathbb{E}_{x \sim \mathcal{D}}\left\{Q\left(x,u,a ; \omega_i\right)\right\},
\end{equation}
in which $u\sim \pi(\cdot\mid x;\phi)$, $a\sim \mu({\cdot\mid x;\xi})$ and $j$ is randomly chosen from $\left\{1,2\right\}$.

To ensure persistent safety under worst-case safety attacks, the task policy $\pi$ must belong to the invariant policy set $\Pi_{\rm{s}}^{\pi_h}$ identified by the safety value function $Q_h$, i.e., $Q_h(x,u,a;\psi)\geq 0$ for $u\sim \pi(\cdot\mid x;\phi)$ and $a= \mu_h({x;\xi})$. We utilize the method of Lagrange multipliers to carry out the constrained policy optimization on $\pi$. For continuous state and action spaces, there are infinite constraints on the task policy $\pi$, so we adopt a Lagrange multiplier network $\lambda(x;\zeta)$ to facilitate the learning process \cite{narasimhan2020approximate}. The Lagrangian is formulated as
\begin{equation}
    \label{Lagrangian}
    \mathcal{L}(\theta, \zeta)=\mathbb{E}_{x \in \mathcal{X}}\left\{Q(x,u,a_1;\omega_j)+\lambda(x;\zeta) Q_h(x,u,a_2;\psi)\right\},
\end{equation}
in which $u\sim \pi(\cdot\mid x;\theta)$, $a_1\sim \mu({x;\beta})$, $a_2= \mu_h({x;\xi})$ and $j$ is randomly chosen from $\left\{1,2\right\}$. We solve for the saddle point of the Lagrangian using dual ascent. For states outside the maximal robust invariant set $S_{\rm{r}}^*$, their safety values are always smaller than zero. Therefore, the corresponding Lagrange multipliers $\lambda(x;\zeta)$ will go to $+\infty$. In this case, the second term becomes dominant in (\ref{Lagrangian}), which indicates that we only optimize the safety value functions for states outside $S_{\rm{r}}^*$, as in dual policy iteration. For practical implementations, we set an upper bound $\lambda_{max}$ for the outputs of $\lambda(x;\zeta)$. The loss function of the task policy is
\begin{equation}
    \label{task policy loss}
    \begin{aligned}
        L_{\pi}(\theta)=&\mathbb{E}_{x \sim \mathcal{D}}\left\{\alpha \log \pi(u \mid x;\theta)-Q(x, u, a_1;\omega_j)\right\}\\&-\mathbb{E}_{x\in\mathcal{D}}\left\{\lambda(x;\zeta)Q_h(x,u,a_2;\psi)\right\},
    \end{aligned}
\end{equation}
in which $u\sim \pi(\cdot\mid x;\theta)$, $a_1\sim \mu({x;\beta})$, $a_2= \mu_h({x;\xi})$ and $j$ is randomly chosen from $\left\{1,2\right\}$. The loss function of the Lagrange multiplier network is composed of two parts, i.e., $L_{\lambda}(\zeta)=L^{A}_{\lambda}(\zeta)+L^{B}_{\lambda}(\zeta)$. For state $x$ inside the current robust invariant set $S_{\rm{r}}^{\pi_h}$, i.e., $Q_h(x,\pi_h(x;\phi),\mu_h(x;\xi);\psi)\geq0$, the loss function is
\begin{equation}
    \label{first part of Lagrangian loss}
    L^{A}_{\lambda}(\zeta)=\mathbb{E}_{x\in\mathcal{D}}\left\{\lambda(x;\zeta)Q_h(x,u,a_2;\psi)\right\}.
\end{equation}
For state $x$ outside the current robust invariant set $S_{\rm{r}}^{\pi_h}$, i.e., $Q_h(x,\pi_h(x;\phi),\mu_h(x;\xi);\psi)<0$, the loss function is
\begin{equation}
    \label{second part of Lagrangian loss}
    L^{B}_{\lambda}(\zeta)=\mathbb{E}_{x\in\mathcal{D}}\left\{(\lambda(x;\zeta)-\lambda_{max})^2\right\}.
\end{equation}
We should focus on optimizing the safety values of states outside the current robust invariant set, therefore (\ref{second part of Lagrangian loss}) provides strong supervised signals for these states (the second term of (\ref{task policy loss}) becomes dominant). We summarize the overall procedure of DRAC in Algorithm \ref{DRAC}.

\begin{algorithm}[ht]
    \label{DRAC}
    \caption{Dually Robust Actor-Critic}
    \KwIn{network parameters $\theta$, $\phi$, $\beta$, $\xi$, $\omega_1$, $\omega_2$, $\psi$, $\zeta$, target network parameters $\bar{\psi}\leftarrow\psi$, $\bar{\omega}_1\leftarrow\omega_1$, $\bar{\omega}_2\leftarrow\omega_2$, temperature $\alpha$, learning rate $\eta$, target smoothing coefficient $\tau$, replay buffer $\mathcal{D}\leftarrow \varnothing$.}
    \For{each iteration}{
        \For{each system step}{
            Sample control input $u_t\sim \pi(x_t;\theta)$;
            
            Sample disturbance $a_t\sim \mu(\cdot\mid x_t;\beta)$ or $a_t= \mu_h(x_t;\xi)$ (random choice);

            Observe next state $x_{t+1}$, reward $r_t$, constraint value $h_t$;

            Store transition $\mathcal{D} \leftarrow \mathcal{D} \cup\left\{\left(x_t, u_t, a_t, r_t, h_t, x_{t+1}\right)\right\}$.
        }

        \For{each gradient step}{
            Sample a batch of data from $\mathcal{D}$;

            Update safety value function $\psi \leftarrow \psi-\eta \nabla_\psi L_{Q_h}(\psi)$;

            Update value functions $\omega_i \leftarrow \omega_i-\eta \nabla_{\omega_i} L_{Q}(\omega_i)$ for $i\in\left\{1,2\right\}$;

            Update task policy $\theta \leftarrow \theta-\eta \nabla_\theta L_{\pi}(\theta)$;

            Update safety policy $\phi \leftarrow \phi-\eta \nabla_\phi L_{\pi_h}(\phi)$;

            Update performance adversary $\beta \leftarrow \beta-\eta \nabla_\beta L_{\mu}(\beta)$;

            Update safety adversary $\xi \leftarrow \xi-\eta \nabla_\xi L_{\mu_h}(\xi)$;

            Update Lagrange multiplier $\zeta \leftarrow \zeta + \eta \nabla_\zeta L_{\lambda}(\zeta)$;

            Update temperature $\alpha \leftarrow \alpha-\eta \nabla_\alpha L(\alpha)$;

            Update target networks $\bar{\psi} \leftarrow \tau \psi+(1-\tau) \psi$, $\bar{\omega}_i \leftarrow \tau \omega_i+(1-\tau) \bar{\omega}_i$ for $i\in\left\{1,2\right\}$.
        }
    }
\end{algorithm}

\section{Experiments}

In this section, we evaluate our algorithm DRAC on safety-critical benchmark environments. We compare our algorithm with state-of-the-art safe RL and robust RL algorithms.

\subsection{Environments}

All algorithms are tested on three environments: CartPole, RacingCar and Walker2D.

\textbf{CartPole} is a classic control task based on MuJoCo \cite{todorov2012mujoco}, as illustrated in Fig. \ref{CartPole pic}. The goal is to push the cart to a target position as fast as possible. The state of the system includes cart position $x$, cart velocity $v$, pole angle $\theta$ and pole angular velocity $\omega$. The safety constraint is imposed on the pole angle: $|\theta|\leq0.2$. Both control inputs and external disturbances are level forces applied on the cart. The protagonist action space is $\mathcal{U}=[-1,1]$ and the adversary action space is $\mathcal{A}=[-0.5,0.5]$.

\textbf{RacingCar} is a safe RL benchmark based on PyBullet \cite{coumans2021}, as shown in Fig. \ref{RacingCar pic}. The four-wheeled car needs to track the edge of the blue region accurately and quickly. The state space $\mathcal{X} \subset \mathbb{R}^7$. The safety constraint is staying inside the region between the two yellow boundaries. The protagonist action space is $\mathcal{U}=[-1,1]^2$ and the adversary action space is $\mathcal{A}=[-0.25,0.25]^2$.

\textbf{Walker2D} is a safe RL benchmark based on MuJoCo \cite{todorov2012mujoco}, as shown in Fig. \ref{Walker2D pic}. The agent is a two-dimensional two-legged figure with a state space $\mathcal{X} \subset \mathbb{R}^{17}$. The goal is to move as far as possible with minimal control efforts. The safety constraint is imposed on the angle $\theta$ and height $z$ of the torso: $|\theta|\leq0.25, 1.0\leq z\leq 1.8$. The control inputs $u\in \mathbb{R}^6$ are torques applied on the hinge joints. The external disturbances $a \in \mathbb{R}^6$ are forces applied on the torso and both feet.

During training, we do not terminate the episodes when safety constraints are violated, since the early termination trick may confuse the reward-seeking and safety-preserving capability of safe RL algorithms, leading to vague results. Safe RL algorithms should figure out ways of maintaining safety purely based on the signals of constraint function $h(x)$. For algorithms without safety considerations, we equip them with the reward shaping trick (adding a bonus to the reward when no constraint is violated) for fair comparison.

\subsection{Baselines}

We compare DRAC to the following baselines.

\textbf{Soft Actor-Critic with Reward Shaping} (SAC-Rew, \cite{haarnoja2018soft}): The standard SAC algorithm with additional bonuses added to the original rewards for encouragement of constraint satisfaction.

\textbf{Robust Soft Actor-Critic with Reward Shaping} (RSAC-Rew, \cite{pinto2017robust}): The SAC version of robust adversarial reinforcement learning (RARL), a state-of-the-art robust RL algorithm, with additional bonuses added to the original rewards for encouragement of constraint satisfaction.

\textbf{Soft Actor-Critic with Lagrange Multiplier} (SAC-Lag, \cite{ha2020learning}): The combination of SAC and the method of Lagrange multipliers, a state-of-the-art safe RL algorithm.

\textbf{Reachable Actor-Critic} (RAC, \cite{yu2022reachability}): The combination of SAC and the reachability constraint, which makes use of Hamilton-Jacobi reachability analysis (under the assumption of no disturbances), a state-of-the-art safe RL algorithm.

\textbf{Soft Actor-Critic with Robust Invariant Set} (SAC-RIS, ours): A weaker implementation of our proposed DRAC, which only focuses on robustness against safety attacks. The performance adversary in DRAC is removed.

\begin{figure}
    \centering
    \subfloat[CartPole]{
        \label{CartPole pic}
        \includegraphics[width=1.05in]{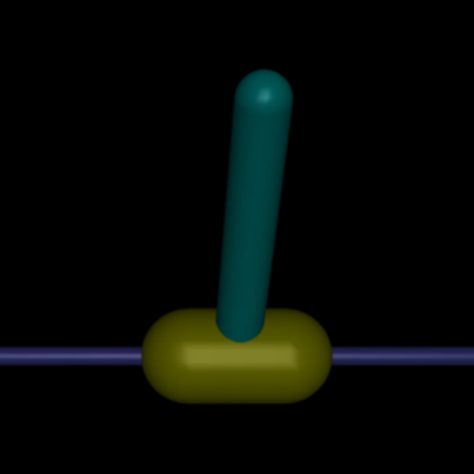}
    }
    \subfloat[RacingCar]{
        \label{RacingCar pic}
        \includegraphics[width=1.05in]{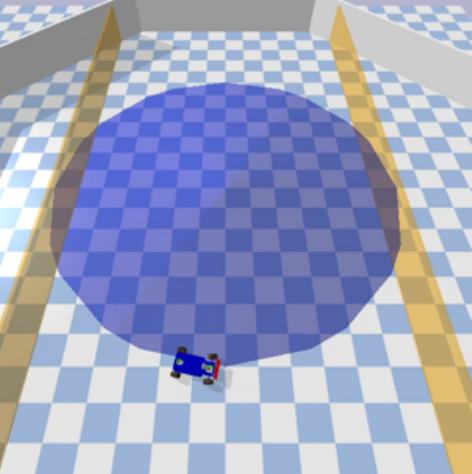}
    }
    \subfloat[Walker2D]{
        \label{Walker2D pic}
        \includegraphics[width=1.05in]{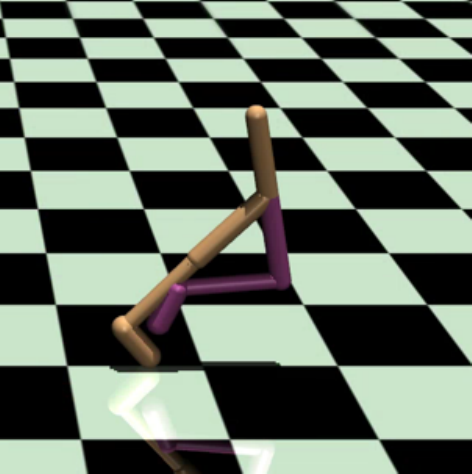}
    }
    \caption{Snapshots of three safety-critical environments.}
\end{figure}

\subsection{Results}

We adopt two evaluation metrics for each algorithm in the training process. (1) \textbf{episode return} indicates the performance of the learned agents. (2) \textbf{episode constraint violation} identifies the safety-preserving capability of the learned agents.

All algorithms are tested under three scenarios. (1) \textbf{no adversary}: There are no external disturbances in the environments. (2) \textbf{safety adversary}: The task policies are evaluated under the attacks of the learned safety adversary from our algorithm DRAC. This scenario compares the safety-preserving robustness of different algorithms. (3) \textbf{performance adversary}: The task policies are evaluated under the attacks of the learned performance adversary from our algorithm DRAC. This scenario compares the reward-seeking robustness of different algorithms.

The learning curves for three environments are shown in Fig. \ref{learning curves}. Since safety is the top priority, it is meaningless to attain high rewards when safety constraints are violated. SAC-RIS and DRAC are the only two algorithms that can maintain persistent safety under all scenarios. SAC-Rew and RSAC-Rew violate the safety constraints heavily, even in the case of no adversary. This is due to the lack of safety-preserving design in their mechanisms. Although the reward shaping trick encourages constraint satisfaction, it is far from enough to achieve persistent safety. SAC-Lag and RAC can maintain safety when no disturbances exist, but their safety-preserving capabilities are compromised under safety or performance attacks. Despite the fact that RAC-RIS is not trained with performance adversary, it exhibits excellent safety-preserving capability under the unseen performance attacks. This further justifies the concept of robust invariant sets, inside which the system can maintain safety under any form of safety adversaries. Compared to SAC-RIS, DRAC achieves considerably higher performance, which justifies the effectiveness of considering performance adversary. The policies learned by DRAC are robust to both performance and safety attacks. Furthermore, they attain higher rewards than SAC-Lag and RAC even in the absence of adversary.

\begin{figure*}[htbp]
    \centering
    \includegraphics[width=6.3in]{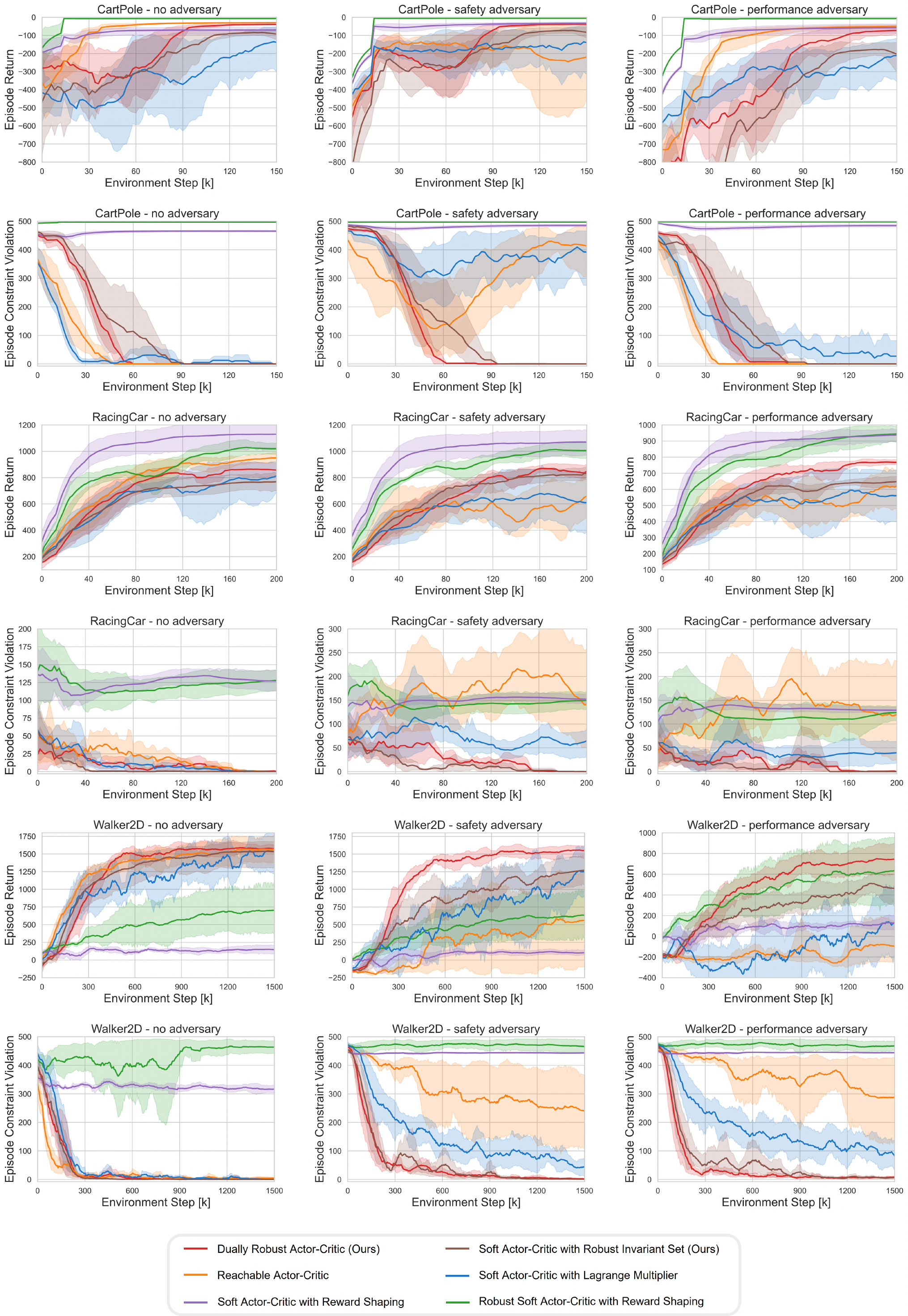}
    \caption{Training curves on three environments. For each environment, the first row corresponds to episode return and the second row corresponds to episode constraint violation. The solid lines correspond to the mean and the shaded regions correspond to 95\% confidence interval over five seeds.}
    \label{learning curves}
\end{figure*}

\section{Conclusion}

In this paper, we propose a systematic framework to unify safe RL and robust RL, including the problem formulation, iteration scheme, convergence analysis and practical algorithm design. The unification is built upon constrained two-player zero-sum Markov games, in which a twofold objective is designed. We propose a dual policy iteration scheme that jointly optimizes task policy and safety policy. Safety value function is utilized to characterize the riskiness of states under disturbances. The safety policy seeks the highest safety value for each state, identifying the robust invariant set, which serves as constraint for the task policy. We prove that the proposed scheme converges to the optimal task policy and the optimal safety policy. Furthermore, we propose dually robust actor-critic (DRAC), a deep RL algorithm that is robust to both safety and performance attacks. Experimental results demonstrate the effectiveness of our algorithm.

\bibliographystyle{IEEEtran}
\bibliography{reference}

\begin{IEEEbiography}[{\includegraphics[width=1in,height=1.25in,clip,keepaspectratio]{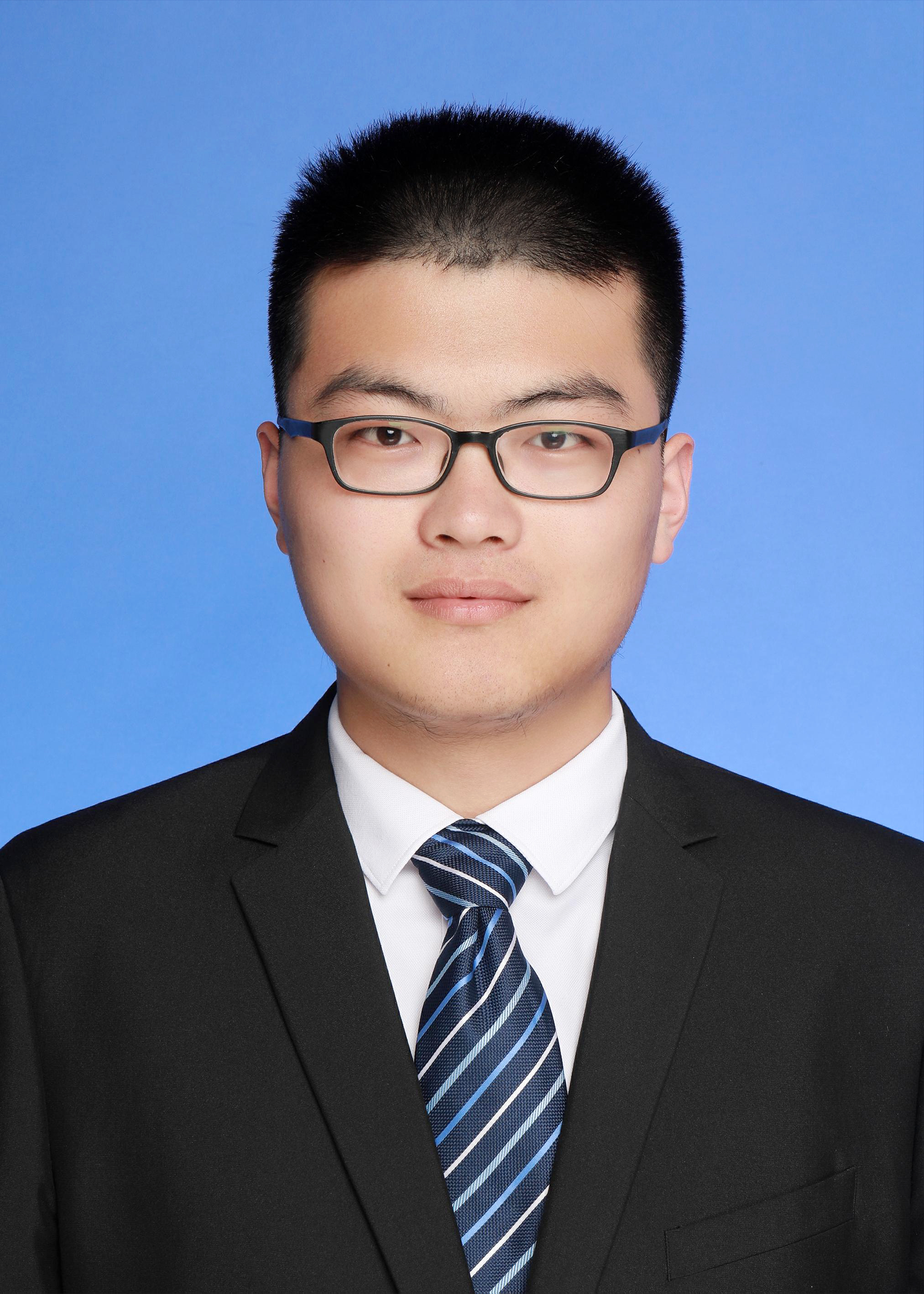}}]{Zeyang Li}received the B.S. degree in mechanical engineering in 2021, from the School of Mechanical Engineering, Shanghai Jiao Tong University, Shanghai, China. He is currently working toward the M.S. degree in mechanical engineering with the Department of Mechanical Engineering, Tsinghua University, Beijing, China. His research interests include reinforcement learning and optimal control.
\end{IEEEbiography}

\begin{IEEEbiography}[{\includegraphics[width=1in,height=1.25in,clip,keepaspectratio]{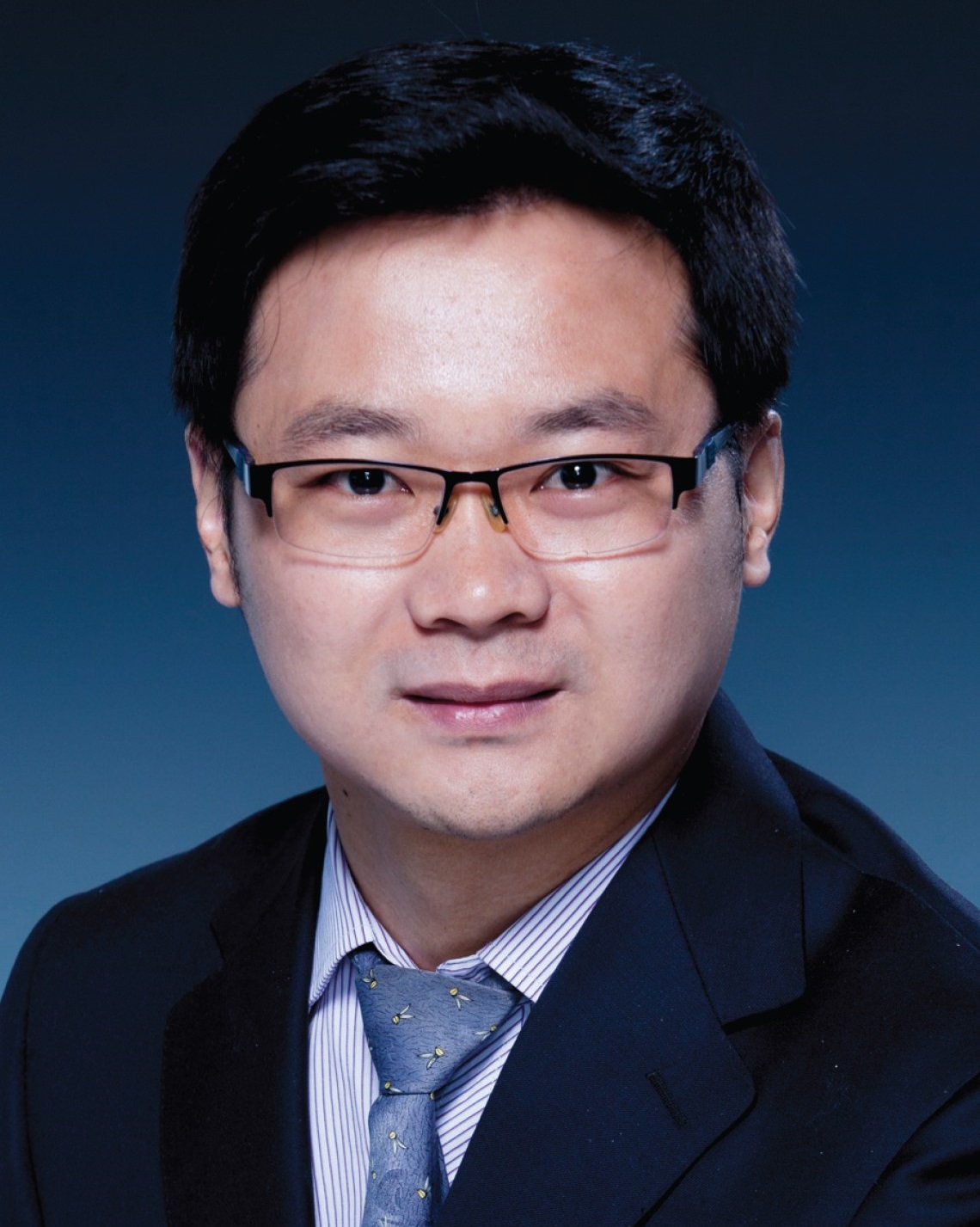}}]{Chuxiong Hu}(Senior Member, IEEE) received his B.S. and Ph.D. degrees in Mechatronic Control Engineering from Zhejiang University, Hangzhou, China, in 2005 and 2010, respectively. He is currently an Associate Professor (tenured) at Department of Mechanical Engineering, Tsinghua University, Beijing, China. From 2007 to 2008, he was a Visiting Scholar in mechanical engineering with Purdue University, West Lafayette, USA. In 2018, he was a Visiting Scholar in mechanical engineering with University of California, Berkeley, CA, USA. His research interests include precision motion control, high-performance multiaxis contouring control, precision mechatronic systems, intelligent learning, adaptive robust control, neural networks, iterative learning control, and robot. Prof. Hu was the recipient of the Best Student Paper Finalist at the 2011 American Control Conference, the 2012 Best Mechatronics Paper Award from the ASME Dynamic Systems and Control Division, the 2013 National 100 Excellent Doctoral Dissertations Nomination Award of China, the 2016 Best Paper in Automation Award, the 2018 Best Paper in AI Award from the IEEE International Conference on Information and Automation, and 2022 Best Paper in Theory from the IEEE/ASME International Conference on Mechatronic, Embedded Systems and Applications. He is now an Associate Editor for the IEEE Transactions on Industrial Informatics and a Technical Editor for the IEEE/ASME Transactions on Mechatronics.
\end{IEEEbiography}

\vspace{-5mm}

\begin{IEEEbiography}[{\includegraphics[width=1in,height=1.25in,clip,keepaspectratio]{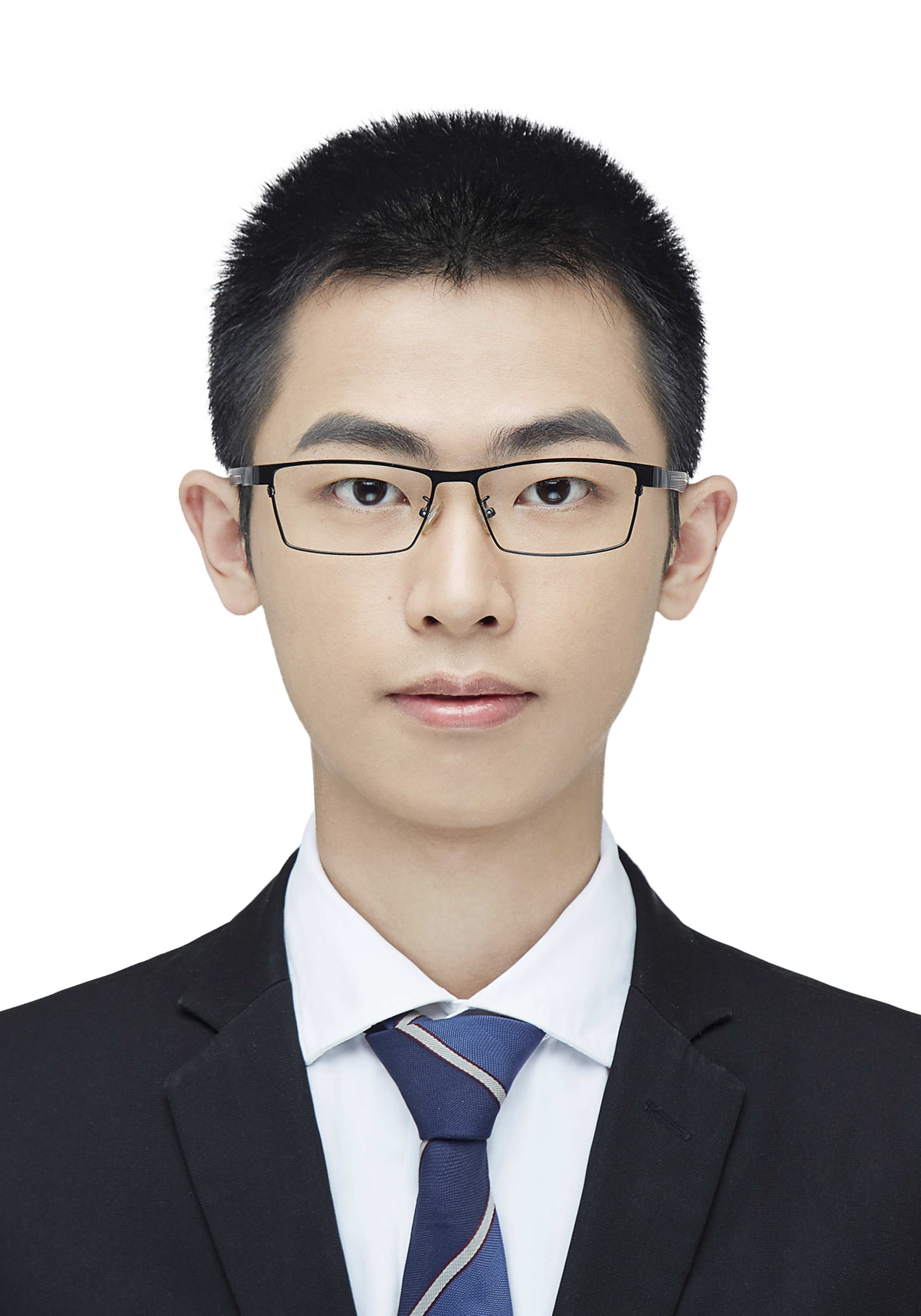}}]{Yunan Wang}(Graduate Student Member, IEEE) received the B.S. degree in mechanical engineering, in 2022, from the Department of Mechanical Engineering, Tsinghua University, Beijing, China. He is currently working toward the Ph.D. degree in mechanical engineering. His research interests include optimal control, trajectory planning, toolpath planning, and precision motion control. He was the recipient of the Best Conference Paper Finalist at the 2022 International Conference on Advanced Robotics and Mechatronics, and 2021 Top Grade Scholarship for Undergraduate Students of Tsinghua University.
\end{IEEEbiography}

\vspace{-5mm}

\begin{IEEEbiography}[{\includegraphics[width=1in,height=1.25in,clip,keepaspectratio]{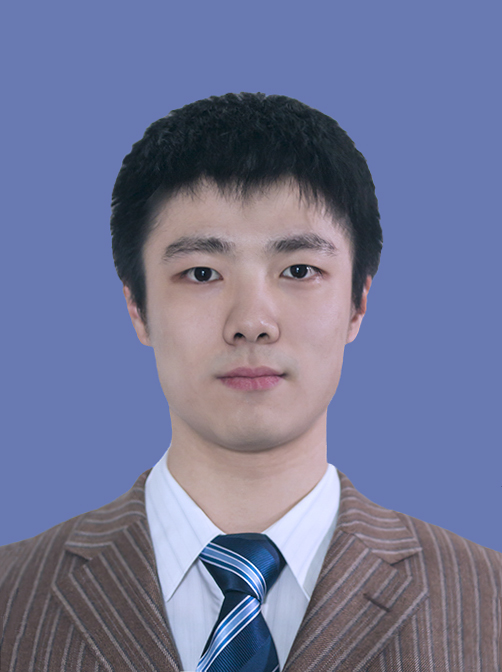}}]{Yujie Yang} received his B.S. degree in automotive engineering from the School of Vehicle and Mobility, Tsinghua University, Beijing, China, in 2021. He is currently pursuing his Ph.D. degree in the School of Vehicle and Mobility, Tsinghua University, Beijing, China. His research interests include decision and control of autonomous vehicles and safe reinforcement learning.
\end{IEEEbiography}

\vspace{-5mm}

\begin{IEEEbiography}[{\includegraphics[width=1in,height=1.25in,clip,keepaspectratio]{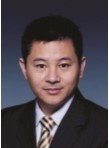}}]{Shengbo Eben Li}(Senior Member, IEEE) received his M.S. and Ph.D. degrees from Tsinghua University in 2006 and 2009. Before joining Tsinghua University, he has worked at Stanford University, University of Michigan, and UC Berkeley. His active research interests include intelligent vehicles and driver assistance, deep reinforcement learning, optimal control and estimation, etc. He is the author of over 130 peer-reviewed journal/conference papers, and the co-inventor of over 30 patents. He is the recipient of best (student) paper awards of IEEE ITSC, ICCAS, IEEE ICUS, CCCC, etc. His important awards include National Award for Technological Invention of China (2013), Excellent Young Scholar of NSF China (2016), Young Professor of ChangJiang Scholar Program (2016), National Award for Progress in Sci \& Tech of China (2018), Distinguished Young Scholar of Beijing NSF (2018), Youth Sci \& Tech Innovation Leader from MOST (2020), etc. He also serves as Board of Governor of IEEE ITS Society, Senior AE of IEEE OJ ITS, and AEs of IEEE ITSM, IEEE Trans ITS, Automotive Innovation, etc.
\end{IEEEbiography}

\end{document}